\newcommand*{\dif}{\mathop{}\!\mathrm{d}}
\theoremstyle{plain}
\newtheorem{theorem}{Theorem}[section]
\newtheorem{lemma}[theorem]{Lemma}
\theoremstyle{definition}
\theoremstyle{remark}
\DeclareMathOperator*{\argmin}{arg\,min}
\title{Towards Differentiable Multilevel Optimization: A Gradient-Based Approach}
\author{%
  Yuntian Gu \quad Xuzheng Chen\\
  School of Intelligence Science and Technology, Peking University\\ 
  \fontsize{9pt}{0pt}{\texttt{guyuntian@stu.pku.edu.cn} \quad \texttt{cxz1022@stu.pku.edu.cn}}
} 
\begin{document}

\maketitle

\begin{abstract}
Multilevel optimization has recently gained renewed interest in machine learning due to its promise in applications such as hyperparameter tuning and continual learning. However, existing methods struggle with the inherent difficulty of handling the nested structure efficiently. This paper introduces a novel gradient-based approach for multilevel optimization designed to overcome these limitations by leveraging a hierarchically structured decomposition of the full gradient and employing advanced propagation techniques. With extension to $n$-level scenarios, our method significantly reduces computational complexity and improves solution accuracy and convergence speed. We demonstrate the efficacy of our approach through numerical experiments comparing with existing methods across several benchmarks, showcasing a significant improvement of solution accuracy. To the best of our knowledge, this is one of the first algorithms providing a general version of implicit diﬀerentiation with both theoretical guarantee and outperforming empirical superiority.
\end{abstract}

\section{Introduction}\label{intro}

Multilevel optimization, comprising problems where decision variables are optimized at multiple hierarchical levels, has been widely studied in economics, mathematics and computer science. Its special case, Bilevel optimization, was first introduced by Stackelberg in 1934 \citep{stackelberg1952theory, dempe2020bilevel} as a concept in economic game theory  and then well established by following works. Recently, with the rapid development of deep learning and machine learning techniques, multilevel optimization has regained attention and been widely exploited in reinforcement learning \citep{hong2023two}, hyperparameter optimization \citep{franceschi2017forward, lorraine2020optimizing}, neural architecture search \citep{liu2018darts}, and meta-learning \citep{franceschi2018bilevel, rajeswaran2019meta}. These works demonstrate the potential of multilevel optimization framework in solving diverse problems with underlying hierarchical dependency within modern machine learning studies.

Despite the proliferated literature, many of the existing works focus on the simplest case, bilevel optimization, due to the challenges of inherent complexity and hierarchical interdependence in multilevel scenarios. Typical approaches to bilevel optimization include constraint-based and gradient-based algorithms \citep{liu2021investigating, zhang2024introduction}, as well as transforming the original problem into equivalent single-level ones. Among them, gradient based techniques have became the most popular strategies \citep{maclaurin2015gradient, franceschi2018far}. Specifically, \citet{franceschi2017forward} first calculates gradient flow of the lower level objective, and then performs gradient computations of the upper level problem. Additionally, value function based approach \citep{liu2023value} offers flexibility in solving non-singleton lower-level problems, and single-loop momentum-based recursive bilevel optimizer \citep{yang2021provably} demonstrates lower complexity than existing stochastic optimizers. 

Extending from the majority works on bilevel cases, more and more attempts have been made to tackle trilevel or multilevel optimization problems \citep{tilahun2012new, shafiei2021trilevel}. 
Yet the adoption of these methods is still hindered by significant challenges. For one, the application of gradients within multilevel optimization, which involves the intricate process of chaining best-response Jacobians, demands a high level of both computational and mathematical skills and arduous efforts in engineering. Moreover, the computational frameworks for calculating these Jacobians, such as iterative differentiation (ITD) \citep{sato2021gradient, lorraine2020optimizing}, is resource-heavy and complex, thus obstructing the implementation of practical applications. Therefore, it calls for more concise, clear, and easy to implement methods with high efficiency in the field of multilevel optimization.

This paper takes a step towards efficient and effective differentiable multilevel optimation. We propose an effective first-order approach solving the upper problem in multilevel optimization (Section \ref{sec:alg}). We then show our method goes beyond trilevel optimization to $n$-level problem by induction, where we provide a concise algorithm for detailed procedures. We rigorously prove the theoretical guarantee of convergence for our method (Section \ref{sec:theo}). Given premises of the boundness of Hessian matrix, we achieve an order of $\mathcal{O}(1/N)$ convergence rate for multilevel optimization when the domain is compact and convex and derive more subtle results of $\mathcal{O}(n^n/N)$ in general. This is a substantial leap forward in that previous works either do not have convergence guarantee \citep{liu2018darts} or only converges asymptotically \citep{sato2021gradient}. We also demonstrate the superiority of our implicit differentiation methods with numerical experiments (Section \ref{sec:exp}). As the result shows, our method achieves $3.3$ times faster speed than ITD in computational efficiency, and better convergence results on the original benchmark.

\section{Background}

\subsection{Bilevel Optimization and Solution Approaches}\label{subsec:bi}
When modeling real-world problems, there is often a hierarchy of decision-makers, and decisions are taken at different levels in the hierarchy. This procedure can be modeled with multilevel optimization. We start by inspecting its simplest case, bilevel optimization, which has been well explored and inspired our approach to multilevel settings. The general form of bilevel optimization is 
\begin{equation}
\label{base}
\begin{aligned}
    &\text{minimize}_{\bf{x} \in S}  &f^U(\bf{x}, \bf{y})  \\
    &\text{subject to}  &\bf{y} \in \argmin_{\bf{y^\prime}} \it{f^L} (\bf{x},\bf{y^\prime})
\end{aligned}
\end{equation}
where U and L mean upper level and lower level respectively, $\bf{x} \in S \subset \mathbb{R}^{m}$, $\bf{y} \in \mathbb{R}^{n}$, and $f^U, f^L: \mathbb{R}^{m+n} \to \mathbb{R}$. This process can be viewed as a leader-follower game. The lower level follower always optimizes its own utility function $f^L$ based on the leader's action. And the upper level leader will determine its optimal action for utility function $f^U$ with the knowledge of follower's policy.

Generally, the solutions to bilevel optimization can be classified into three categories. The first approach tries to explicitly derive the function of $y$ with respect to $x$ from the lower level optimization problem, turning the upper level optimization into unconstrained optimization $\min_{\bf{x}}  f^U(\bf{x}, y(\bf{x}))$,
which is easy to solve through analytic methods or numerical methods like gradient decent. 

The second approach solves the problem by replacing the 
lower level problem with equivalent forms, like inequalities or KKT conditions. For example, the \cref{base} can be equivalently represented as $\min_{\bf{x}}  f^U (\bf{x}, \bf{y})$ subject to $f^L(\mathbf{x}, \mathbf{y}) \le (f^L)^*$, where $(f^L)^*$ is the minimum of $f^L(\bf{x}, \bf{y})$.


The last approach is gradient-based, which draws our attention. The gradient of the upper level function $f^U(\bf{x}, \bf{y})$ with respect to $\bf{x}$ can be written as $ \frac{\dif f^U}{\dif \bf{x}} = \frac{\partial f^U}{\partial \bf{x}} + \frac{\partial f^U}{\partial \bf{y}} \frac{\dif \bf{y}}{\dif \bf{x}}$.
Since $\frac{\partial f^U}{\partial \bf{x}}$ and $\frac{\partial f^U}{\partial \bf{y}}$ are known, if we can calculate (or approximate) the gradient of $\bf{y}$ to $\bf{x}$, i.e. $\frac{d \bf{y}}{d \bf{x}}$, we will be able to optimize the upper level function through gradient decent. Note that the gradient is calculated at $(\bf{x}, \bf{y}^*)$ due to the lower level constraint. The method we propose in the following section will provide how to calculate $\frac{\dif f^U}{\dif \mathbf{x}}$ in the case of trilevel and $n$-level.

\subsection{Implicit Differentiation Method}
As mentioned above, we can see that the key to getting the gradient is to obtain the derivative of $\bf y$ with respect to $\bf x$, because $f$ is known and the partial derivatives of $f$ with respect to $\bf x$ and $\bf y$ can be calculated easily. One intuitive idea is to utilize the function $f^L(\bf{x}, \bf{y})$ to get the relationship between $\bf x$ and $\bf y$ using chain rule \citep{samuel2009learning}: we know that the constraint that we need to minimize $f^L(\bf{x}, \bf{y})$ can provide the relationship $\bf{y} = \bf{y}(\bf{x})$ so that we can replace $\bf{y}$ with $\bf{x}$ in $f^U$. 

The derivative of $f^L$ may be an unsolvable differential equation and may not directly provide an explicit expression $\bf{y} = \bf{y}(\bf{x})$. \citet{gould2016differentiating} proposes implicit differentiation method to get the derivative without solving the differential equation as follows:

\begin{lemma}\label{lemma_31}
Let $f:\mathbb R^{m+n}\to \mathbb R$ be a continuous function with second derivatives and $\det (\frac{\partial ^2 f}{\partial \bf{y} ^2} ) \neq 0$. Let $\bf{g}(\bf{x}) = \argmin _{\bf y} f(\bf x, \bf y)$. Then the derivative of $\bf g$ with respect to $\bf x$ is
$$\dfrac{\dif \mathbf{g}(\mathbf{x})}{\dif \mathbf{x}} = - f_{YY}(\mathbf{x}, \mathbf{g}(\mathbf{x}))^{-1} f_{XY}(\mathbf{x}, \mathbf{g}(\mathbf{x}))$$
\end{lemma}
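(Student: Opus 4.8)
The plan is to reduce the statement to a single application of the implicit function theorem to the first-order stationarity condition of the lower-level problem. First I would record the observation that, since $\mathbf{g}(\mathbf{x})$ minimizes $f(\mathbf{x}, \cdot)$ and (as I will assume) attains this minimum at an interior point, the gradient of $f$ in the $\mathbf{y}$ direction must vanish there. Writing $f_Y$ for the vector-valued partial $\partial f / \partial \mathbf{y} : \mathbb{R}^{m+n} \to \mathbb{R}^n$, this yields the key identity
$$f_Y(\mathbf{x}, \mathbf{g}(\mathbf{x})) = \mathbf{0},$$
valid for every $\mathbf{x}$ in the relevant domain. This identity is the entire engine of the argument: it encodes, without any closed form, the implicit dependence of $\mathbf{y}$ on $\mathbf{x}$ that the lemma seeks to differentiate.

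Next I would differentiate this identity with respect to $\mathbf{x}$, treating the left-hand side as the composition of $f_Y$ with the map $\mathbf{x} \mapsto (\mathbf{x}, \mathbf{g}(\mathbf{x}))$ and applying the chain rule. The partial of $f_Y$ in the $\mathbf{x}$ slot contributes the mixed-derivative block $f_{XY}$, while the partial in the $\mathbf{y}$ slot contributes the Hessian block $f_{YY}$, which multiplies the Jacobian $\dif\mathbf{g}/\dif\mathbf{x}$ we are after. Because the composed map is identically zero, its total derivative vanishes, producing the linear matrix equation
$$f_{XY}(\mathbf{x}, \mathbf{g}(\mathbf{x})) + f_{YY}(\mathbf{x}, \mathbf{g}(\mathbf{x})) \, \frac{\dif\mathbf{g}(\mathbf{x})}{\dif\mathbf{x}} = \mathbf{0}.$$
The hypothesis $\det(f_{YY}) \neq 0$ then lets me left-multiply by $f_{YY}^{-1}$ and solve, delivering exactly the claimed formula.

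The main obstacle is not the algebra but the two regularity facts that make the differentiation legitimate: that $\mathbf{g}$ is single-valued near the point of interest, and that it is itself differentiable, so that $\dif\mathbf{g}/\dif\mathbf{x}$ exists at all. Here I would lean on the implicit function theorem applied to $F(\mathbf{x}, \mathbf{y}) := f_Y(\mathbf{x}, \mathbf{y})$: the condition $\det(\partial F / \partial \mathbf{y}) = \det(f_{YY}) \neq 0$ is precisely the nonsingularity hypothesis that theorem requires, and it guarantees a locally unique, continuously differentiable solution $\mathbf{y} = \mathbf{g}(\mathbf{x})$ of $F = \mathbf{0}$ in a neighborhood of any stationary point. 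I would also flag the interiority assumption needed to pass from \emph{$\mathbf{g}(\mathbf{x})$ is a minimizer} to \emph{$f_Y$ vanishes there}; if the minimizer could lie on the boundary of the feasible set, the first-order condition would weaken to an inequality and the identity above would fail. Granting these standard regularity conditions, the chain-rule computation closes the argument.
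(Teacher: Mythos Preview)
Your proposal is correct and follows essentially the same approach as the paper: impose the first-order condition $f_Y(\mathbf{x},\mathbf{g}(\mathbf{x}))=0$, differentiate it in $\mathbf{x}$ via the chain rule to obtain $f_{XY}+f_{YY}\,\dif\mathbf{g}/\dif\mathbf{x}=0$, and invert $f_{YY}$. The only difference is that you are more explicit than the paper about invoking the implicit function theorem to secure the differentiability of $\mathbf{g}$ and about the interiority caveat, which is a welcome addition rather than a departure.
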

Detailed proofs of lemma \ref{lemma_31} can be seen in Appendix \ref{2.1}.

Based on the lemmas above, we have a method to calculate the derivative of $\bf y$ with respect to $\bf x$, which allows us to obtain gradients easily. In the follow-up of the paper, we will discuss higher-level frameworks, mainly focusing on trilevel and extending to $n$-level conclusions.

\subsection{Procedures and Mathematical Formulation of Multilevel Optimization}
As discussed in \ref{subsec:bi}, bilevel optimization can be viewed as a leader-follower game. The follower makes his own optimal decision $\bf y(x)$ for each fixed decision $\bf x$ of the leader. With the optimal decision $\bf y(x)$ of the follower, the leader can select $\bf x$ that maximizes his utility, which makes $f(\bf x, y)$ to the minimum here. 

Essentially, this can be considered as a special case of sequential game with two players. In sequential games, players choose strategies in a sequential order, hence some may take actions first while others later. When there are $3$ players making decisions in a specific order, it can be seen as leaders of different levels. Specifically, the third player takes $\argmin_{\bf z} f_3(\bf x,y,z)$ given fixed $\bf x$ and $\bf y$. The second player takes $\argmin_{\bf y} f_2(\bf x,y,z)$ given fixed $\bf x$, with perfect understanding of how $\bf z$ will change according to $\bf y$. The first player gives $\argmin_{\bf x} f_1(\bf x,y,z)$, knowing fully how $\bf y$ and $\bf z$ change according to $\bf x$.

Generally, the mathematical formulation of $n$-level optimization problem can be expressed as follows:
$$
\begin{aligned}
 \min _{\bf x_1 \in S} f_1 (\bf x_1, & \bf x_2^*, \ldots \bf x_n^*)  \\
\text {s.t. } \bf{x}_2^* & =  \argmin _{\bf x_2} f_2\left(\bf x_1, \bf x_2, \ldots, \bf x_n^*\right) \\
\quad \quad \cdots & \\
& \text { s.t. } \bf{x_n}^*=\argmin _{\bf x_n} f_n\left(\bf x_1, \bf x_2, \ldots \bf x_n\right) \\
\end{aligned}
$$
Given the formal definition of multilevel optimization, we will propose our method for computing the full gradient of upper-level problems, and provide theoretical analysis along with experimental results in the following sections.

\section{Gradient-based Optimization Algorithm for Multilevel Optimization}\label{sec:alg}

In this section, we will propose a new algorithm for trilevel optimization, and extend the results to general multilevel optimization.

\subsection{The gradient of Trilevel Optimization}

We start by considering the solution of lower level problems 
 \begin{align*}
      \bf{g(x,y)} &= \argmin_{\bf{z} \in \mathbb{R}^{d_3}}f_3(\bf{x,y,z})\\
      \bf{h(x)} &= \argmin_{\bf{y} \in \mathbb{R} ^{d_2}} f_2(\bf{x,y,g(x,y)})
 \end{align*}
In this work, we focus on the case where the lower-level solution $\bf g, h$ are singletons, which covers a variety of learning tasks \citep{domke2012generic, mackay2019self}. We further assume that $f_2$ and $f_3$ are strictly convex so that $\det (\frac{\dif ^2 f_3}{\dif \bf{z} ^2}) > 0$ and $\det (\frac{\dif ^2 f_2}{\dif \bf{y} ^2}) > 0$, but do not require $\bf g(x,y)$ or $\bf h(x)$ to have a closed-form formula. Throughout the remainder of the paper, we will evaluate all derivatives at the optimal solution of the lower-level subproblem. For the sake of brevity, we will use notations such as $f_{XY}$ to denote $f_{XY}(\bf{x}, \bf{h(x)}, \bf{g(x, y)})$.

\begin{lemma}
\label{lemma1}
Let $f: \mathbb{R}^{d_1+d_2+d_3} \to \mathbb{R}$ be a continuous function with second derivatives. Let $\mathbf{g(x,y)} = \argmin_{\mathbf{z} \in \mathbb{R}^{d_3}}f(\mathbf{x,y,z})$, then the following properties about $g$ holds:\\
(a)
$$\dfrac{\partial \mathbf{g}}{\partial \mathbf{x}} = -f_{ZZ}^{-1}f_{XZ} $$
(b)
$$\dfrac{\partial \mathbf{g}}{\partial \mathbf{y}} = -f_{ZZ}^{-1}f_{YZ} $$
\end{lemma}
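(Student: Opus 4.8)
The plan is to treat $\mathbf{g}$ as a function defined implicitly through the first-order optimality condition of the inner minimization and then differentiate that condition, exactly in the spirit of Lemma~\ref{lemma_31}. In fact (a) and (b) are nothing but two instances of Lemma~\ref{lemma_31}: part (a) is that lemma applied with $\mathbf{x}$ playing the role of the parameter while $\mathbf{y}$ is held fixed, and part (b) is the same with $\mathbf{y}$ as the parameter and $\mathbf{x}$ held fixed. In both cases the minimized variable is $\mathbf{z}$. So the whole argument reduces to a single computation, which I carry out for the $\mathbf{x}$-derivative.

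For the computation, I first record the stationarity condition. Since $\mathbf{g(x,y)}$ is an unconstrained minimizer of $f(\mathbf{x},\mathbf{y},\cdot)$ over all of $\mathbb{R}^{d_3}$, the gradient in $\mathbf{z}$ vanishes there, giving the identity
$$ f_Z(\mathbf{x},\mathbf{y},\mathbf{g(x,y)}) = \mathbf{0}, $$
which holds on a whole neighborhood of $(\mathbf{x},\mathbf{y})$ and may therefore be differentiated. Holding $\mathbf{y}$ fixed and applying the chain rule to this identity (the outer map $f_Z$ depends on $\mathbf{x}$ both directly and through $\mathbf{g}$) yields
$$ f_{XZ} + f_{ZZ}\,\frac{\partial \mathbf{g}}{\partial \mathbf{x}} = \mathbf{0}. $$
Because $f$ is strictly convex in $\mathbf{z}$ we have $\det(f_{ZZ})>0$, so $f_{ZZ}$ is invertible; left-multiplying by $f_{ZZ}^{-1}$ gives part (a). Repeating verbatim with $\mathbf{x}$ fixed and differentiating in $\mathbf{y}$ produces $f_{YZ}+f_{ZZ}\,\partial\mathbf{g}/\partial\mathbf{y}=\mathbf{0}$, hence part (b).

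The only genuinely nontrivial point, and the one I would treat carefully, is justifying that $\mathbf{g}$ is differentiable so that the chain rule above is legitimate. This follows from the implicit function theorem applied to $F(\mathbf{x},\mathbf{y},\mathbf{z}) := f_Z(\mathbf{x},\mathbf{y},\mathbf{z})$: its Jacobian in $\mathbf{z}$ is precisely $f_{ZZ}$, which is nonsingular by the strict-convexity assumption, so near any stationary point the solution $\mathbf{z}=\mathbf{g(x,y)}$ is a $C^1$ function of $(\mathbf{x},\mathbf{y})$. A secondary bookkeeping issue is the ordering/transpose convention in the mixed-partial symbols: since $f\in C^2$ its mixed second derivatives are symmetric, so the block I obtain as $\partial f_Z/\partial\mathbf{x}$ matches what the statement writes as $f_{XZ}$ (and similarly $f_{YZ}$), and I would fix this convention at the outset so that the dimensions line up, with $f_{ZZ}^{-1}f_{XZ}$ being $d_3\times d_1$ and $f_{ZZ}^{-1}f_{YZ}$ being $d_3\times d_2$.
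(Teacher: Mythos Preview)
Your proposal is correct and follows essentially the same approach as the paper: differentiate the first-order optimality condition $f_Z(\mathbf{x},\mathbf{y},\mathbf{g(x,y)})=0$ in $\mathbf{x}$ (resp.\ $\mathbf{y}$) via the chain rule and invert $f_{ZZ}$. If anything, you add slightly more care than the paper does, by invoking the implicit function theorem to justify differentiability of $\mathbf{g}$ and by noting the transpose/dimension conventions.
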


\begin{lemma}
\label{lemma2}
Let $f: \mathbb{R}^{d_1+d_2+d_3} \to \mathbb{R}$ and $\mathbf{g}: \mathbb{R}^{d_1+d_2} \to \mathbb{R}^{d_3}$ be continuous functions with second derivatives. Let $\mathbf{h(x)} = \argmin_{y\in \mathbb{R}}f(\mathbf{x,y,g(x,y)})$, then the derivative of $\bf h$ with respect to $\bf x$ is:
\begin{multline*}
    \dfrac{\partial \mathbf{h}}{\partial \mathbf{x}} = -\left(f_{YY} + 2f_{ZY} \dfrac{\partial \mathbf{g}}{\partial \mathbf{y}}  + (\dfrac{\partial \mathbf{g}}{\partial \mathbf{y}})^\top f_{ZZ} \dfrac{\partial \mathbf{g}}{\partial \mathbf{y}} + \sum _{i=1}^{d_3} (f_Z)_i \dfrac{\partial}{\partial \mathbf{y}} (\dfrac{\partial \mathbf{g}}{\partial \mathbf{y}})_i\right)^{-1}\\
    \left(f_{XY} + f_{ZY} \dfrac{\partial\mathbf{g}}{\partial \mathbf x}  + (\dfrac{\partial \mathbf g}{\partial \mathbf y})^\top f_{XZ} +  (\dfrac{\partial \mathbf g}{\partial \mathbf y})^\top f_{ZZ}\dfrac{\partial \mathbf g}{\partial \mathbf x} + \sum_{i=1}^{d_3} (f_Z)_i \dfrac{\partial}{\partial \mathbf x}(\dfrac{\partial \mathbf g}{\partial \mathbf y})_i\right )
\end{multline*}
\end{lemma}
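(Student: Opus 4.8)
The plan is to recognize the middle level as an ordinary implicit-differentiation problem once the inner solution is folded in. I define the reduced objective $F(\mathbf{x},\mathbf{y}) := f(\mathbf{x},\mathbf{y},\mathbf{g}(\mathbf{x},\mathbf{y}))$, so that $\mathbf{h}(\mathbf{x}) = \argmin_{\mathbf{y}} F(\mathbf{x},\mathbf{y})$. Since $F$ is $C^2$ and its Hessian $F_{YY}$ is nonsingular (as required in \cref{lemma_31}, and guaranteed in the trilevel application by the strict convexity of $f_2$), I can apply the implicit-differentiation identity of \cref{lemma_31} verbatim to $F$ and obtain $\tfrac{\partial \mathbf{h}}{\partial \mathbf{x}} = -F_{YY}^{-1} F_{XY}$. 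The whole content of the lemma then reduces to expanding the two Hessian blocks $F_{YY}$ and $F_{XY}$ of the composite $F$ and checking that they coincide with the first and second bracketed factors in the statement.

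First I would compute the gradient by the chain rule, $\tfrac{\partial F}{\partial \mathbf{y}} = f_Y + \bigl(\tfrac{\partial \mathbf{g}}{\partial \mathbf{y}}\bigr)^{\!\top} f_Z$, keeping in mind that $f_Y$ and $f_Z$ are evaluated along $(\mathbf{x},\mathbf{y},\mathbf{g}(\mathbf{x},\mathbf{y}))$. Differentiating this expression once more---with respect to $\mathbf{y}$ to get $F_{YY}$ and with respect to $\mathbf{x}$ to get $F_{XY}$---produces three kinds of contributions: (i) the bare second partials $f_{YY}$ and $f_{XY}$; (ii) cross terms in which the differentiation threads through the $\mathbf{z}$-slot of $f_Y$ and $f_Z$, generating factors $f_{YZ}$, $f_{ZZ}$, $f_{ZX}$ contracted against $\tfrac{\partial \mathbf{g}}{\partial \mathbf{y}}$ and $\tfrac{\partial \mathbf{g}}{\partial \mathbf{x}}$; and (iii) the terms in which the prefactor $\bigl(\tfrac{\partial \mathbf{g}}{\partial \mathbf{y}}\bigr)^{\!\top}$ is itself differentiated, yielding the Hessian-of-$\mathbf{g}$ sums $\sum_{i}(f_Z)_i \tfrac{\partial}{\partial \mathbf{y}}\bigl(\tfrac{\partial \mathbf{g}}{\partial \mathbf{y}}\bigr)_i$ and its $\mathbf{x}$-analogue. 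Collecting these, and using the symmetry of the Hessian to merge the two mirror-image cross terms into the single factor-of-two term, reproduces exactly the two bracketed factors claimed.

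The hard part will be the second-order chain-rule bookkeeping rather than any conceptual leap: I must keep straight which occurrences of $\mathbf{z}=\mathbf{g}(\mathbf{x},\mathbf{y})$ depend on $\mathbf{y}$ versus $\mathbf{x}$, track every transpose so that the blocks carry the correct $d_2\times d_2$ and $d_2\times d_1$ shapes, and---crucially---retain the genuine second-derivative terms of $\mathbf{g}$. These last terms cannot be dropped, because here $\mathbf{g}$ is an arbitrary $C^2$ map and $f_Z\neq 0$ in general, unlike the inner level where stationarity would annihilate them. To tame the transposes I would carry out the expansion in index notation, as in the proof of \cref{lemma1}, and repackage into matrix form only at the very end. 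Note that the explicit inner-level expressions $\tfrac{\partial \mathbf{g}}{\partial \mathbf{x}} = -f_{ZZ}^{-1} f_{XZ}$ and $\tfrac{\partial \mathbf{g}}{\partial \mathbf{y}} = -f_{ZZ}^{-1} f_{YZ}$ from \cref{lemma1} are \emph{not} needed here, since the result is stated with $\tfrac{\partial \mathbf{g}}{\partial \mathbf{x}}$ and $\tfrac{\partial \mathbf{g}}{\partial \mathbf{y}}$ left symbolic; they enter only later, when the full trilevel gradient is chained together.
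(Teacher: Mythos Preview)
Your proposal is correct and follows essentially the same route as the paper's proof: both start from the stationarity condition $\tfrac{\partial F}{\partial \mathbf{y}} = f_Y + (\tfrac{\partial \mathbf{g}}{\partial \mathbf{y}})^\top f_Z = 0$, differentiate it in $\mathbf{x}$, expand via the chain and product rules, collect the $\tfrac{\partial \mathbf{h}}{\partial \mathbf{x}}$ terms, and invert. Your framing through the reduced objective $F$ and an appeal to \cref{lemma_31} is a mild repackaging of what the paper does by hand, but the computation and the resulting terms are identical.
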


The basic idea of the proofs of lemma \ref{lemma1} and lemma \ref{lemma2} is to take the derivative of $\frac{\dif \mathbf g}{\dif \mathbf z}$ and $\frac{\dif \mathbf h}{\dif \mathbf y}$, and we provide the detailed proofs in Appendix \ref{3.1}.

Now, we are ready to bring out the main theorem, which gives the gradient of the upper problem and enable us with all kinds of gradient-based optimization algorithm like Adam \citep{kingma2017adam}.

\begin{theorem}
\label{theorem3}
Define the objective functions $f_1, f_2, f_3: \mathbb{R}^{d_1+d_2+d_3} \to \mathbb{R}$ with first-order, second-order and third-order derivatives respectively. Define the functions $g$ and $h$ as:
\begin{align*}
      \bf{g(x,y)} &= \argmin_{\bf{z} \in \mathbb{R}^{d_3}}f_3(\bf{x,y,z})\\
      \bf{h(x)} &= \argmin_{\bf{y} \in \mathbb{R} ^{d_2}} f_2(\bf{x,y,g(x,y)})
\end{align*}
Then the derivative of $f_1$ with respect to $\bf x$ can be written as:
$$\dfrac{\dif f_1}{\dif \mathbf{x}} = \dfrac{\partial f_1}{\partial \mathbf{x}} + (\dfrac{\partial \mathbf{h}}{\partial \mathbf{x}})^\top \dfrac{\partial f_1}{\partial \mathbf y} + (\dfrac{\partial \mathbf g}{\partial \mathbf x} + \dfrac{\partial \mathbf g}{\partial \mathbf y} \dfrac{\partial \mathbf h}{\partial \mathbf x})^\top \dfrac{\partial f_1}{\partial \mathbf z}$$
where all the derivatives are calculated at $\mathbf{y=h(x), z=g(x, h(x))}$. The derivative of $\mathbf g$ and $\mathbf h$ can be calculated using lemma \ref{lemma1} and lemma \ref{lemma2}.
\end{theorem}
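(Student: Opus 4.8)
The plan is to treat the theorem as a direct application of the multivariate chain rule to the scalar composite
$$F(\mathbf{x}) = f_1\bigl(\mathbf{x}, \mathbf{h}(\mathbf{x}), \mathbf{g}(\mathbf{x}, \mathbf{h}(\mathbf{x}))\bigr),$$
where the two inner arguments $\mathbf{y} = \mathbf{h}(\mathbf{x})$ and $\mathbf{z} = \mathbf{g}(\mathbf{x}, \mathbf{h}(\mathbf{x}))$ are regarded as implicitly defined functions of $\mathbf{x}$ alone. Since Lemma \ref{lemma1} and Lemma \ref{lemma2} already supply closed forms for the partial Jacobians $\frac{\partial \mathbf{g}}{\partial \mathbf{x}}$, $\frac{\partial \mathbf{g}}{\partial \mathbf{y}}$ and $\frac{\partial \mathbf{h}}{\partial \mathbf{x}}$ at the optimal lower-level points, the only genuine content of the theorem is the correct propagation of these Jacobians through the nested composition; no new implicit differentiation is needed at this stage.

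First I would compute the total derivative of the innermost variable $\mathbf{z}$ with respect to $\mathbf{x}$. The key subtlety is that $\mathbf{z} = \mathbf{g}(\mathbf{x}, \mathbf{y})$ depends on $\mathbf{x}$ along two paths: directly through the first slot of $\mathbf{g}$, and indirectly through $\mathbf{y} = \mathbf{h}(\mathbf{x})$. The chain rule therefore gives
$$\frac{\dif \mathbf{z}}{\dif \mathbf{x}} = \frac{\partial \mathbf{g}}{\partial \mathbf{x}} + \frac{\partial \mathbf{g}}{\partial \mathbf{y}}\, \frac{\partial \mathbf{h}}{\partial \mathbf{x}},$$
which is exactly the bracketed Jacobian appearing in the claimed formula. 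For $\mathbf{y}$ the total derivative is simply $\frac{\dif \mathbf{y}}{\dif \mathbf{x}} = \frac{\partial \mathbf{h}}{\partial \mathbf{x}}$, since $\mathbf{h}$ is a function of $\mathbf{x}$ only.

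Next I would apply the chain rule to $F$ itself, writing
$$\frac{\dif f_1}{\dif \mathbf{x}} = \frac{\partial f_1}{\partial \mathbf{x}} + \Bigl(\frac{\dif \mathbf{y}}{\dif \mathbf{x}}\Bigr)^{\!\top} \frac{\partial f_1}{\partial \mathbf{y}} + \Bigl(\frac{\dif \mathbf{z}}{\dif \mathbf{x}}\Bigr)^{\!\top} \frac{\partial f_1}{\partial \mathbf{z}},$$
and then substituting the two total derivatives computed above. Collecting terms reproduces the stated expression; the transposes arise from the paper's convention of writing $\frac{\partial f_1}{\partial \mathbf{y}}$ and $\frac{\partial f_1}{\partial \mathbf{z}}$ as column gradients while the $\frac{\partial}{\partial \mathbf{x}}$ Jacobians act on the left. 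Throughout I would be careful to evaluate every partial derivative at $\mathbf{y} = \mathbf{h}(\mathbf{x})$ and $\mathbf{z} = \mathbf{g}(\mathbf{x}, \mathbf{h}(\mathbf{x}))$, consistent with the standing convention of the section.

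The main obstacle is not analytical but notational: keeping the orientation of each Jacobian and gradient consistent so that the matrix–vector products are well-defined and the transposes land in the right places. In particular, one must ensure that $\frac{\partial \mathbf{g}}{\partial \mathbf{y}}$ is multiplied on the correct side by $\frac{\partial \mathbf{h}}{\partial \mathbf{x}}$ in the inner total derivative, and that the two-path dependence of $\mathbf{z}$ on $\mathbf{x}$ is neither double-counted nor collapsed into a single partial derivative. Once this bookkeeping is fixed, the identity follows immediately from the differentiability assumptions on $f_1, \mathbf{g}, \mathbf{h}$ together with Lemma \ref{lemma1} and Lemma \ref{lemma2}.
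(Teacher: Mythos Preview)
Your proposal is correct and matches the paper's own justification exactly: the paper simply states that the theorem ``is the direct result of composing best-response Jacobians via applying chain rule twice,'' which is precisely the two-step chain-rule argument you outline.
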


\cref{theorem3} is the direct result of composing best-response Jacobians via applying chain rule twice. 

\subsection{The gradient of Multilevel Optimization}
Although by far the discussion has focused on trilevel optimization, our method is not confined to this scope. Previously, we consolidated all relevant information from the lower-level problem, then applied implicit differentiation to the upper-level problem. Next, we propose an algorithm that employs recursion to address the general multilevel optimization problem.

\paragraph{High-level idea}
Consider an \(n\)-level optimization problem where \(f_1, f_2, \ldots, f_n\) are smooth functions. Assuming the existence of an algorithm capable of solving the \((n-1)\)-level problem, which yields solutions for \(\frac{\dif \mathbf{x}_i}{\dif \mathbf{x}_j}\) and \(\frac{\partial \mathbf{x}_i}{\partial \mathbf{x}_j}\) for every \(2 \leq j < i\), we can apply the chain rule as follows:
\begin{equation}
\begin{aligned}
\label{eq:high}
\dfrac{\dif \mathbf{x}_i}{\dif \mathbf{x}_1} = \sum _{1\leq j<i}\dfrac{\partial \mathbf{x}_i}{\partial \mathbf{x}_j} \dfrac{\dif \mathbf{x}_j}{\dif \mathbf{x}_1} = \sum _{P\in P^{(1,i)}} \prod _{j=1}^{|P| - 1} \dfrac{\partial \mathbf{x}_{P_{j+1}}}{\partial \mathbf{x}_{P_j}} = \sum _{2\leq j\leq i}\dfrac{\dif \mathbf{x}_i}{\dif \mathbf{x}_j} \dfrac{\partial \mathbf{x}_j}{\partial \mathbf{x}_1}
\end{aligned}
\end{equation}

where \(P^{(1,i)}\) represents the set of paths from \(1\) to \(i\), with all instances of multiplication referring to the matrix multiplication of Jacobian matrices.

It is noteworthy that \(\frac{\partial \mathbf{x}_i}{\partial \mathbf{x}_1}\) for \(i > 2\) can be computed by solving an \((n-1)\)-level problem, by treating \(\mathbf{x}_2\) as a constant. Consequently, the primary challenge that remains is determining \(\frac{\partial \mathbf{x}_2}{\partial \mathbf{x}_1}\). Once we have the partial derivative \(\frac{\partial \mathbf{x}_2}{\partial \mathbf{x}_1}\), the full derivative $\frac{\dif \mathbf{x}_i}{\dif \mathbf{x}_1}$ of any $i$ can be easily calculated via \cref{eq:high}.

As \(\mathbf x_2\) minimizes \(f_2\), we denote this condition as \(f_2' = \frac{\dif f_2}{\dif \mathbf x_2} = 0\). Consequently,
\begin{equation}
\begin{aligned}
\label{solve}
\dfrac{\dif f_2'}{\dif \mathbf x_1} &= \dfrac{\partial f_2'}{\partial \mathbf x_1} + \dfrac{\partial f_2'}{\partial \mathbf x_2}\dfrac{\partial \mathbf x_2}{\partial \mathbf x_1} + \sum _{i=3}^n \dfrac{\partial f_2'}{\partial \mathbf x_i}\dfrac{\dif \mathbf x_i}{\dif \mathbf x_1}\\
&= \dfrac{\partial f_2'}{\partial \mathbf x_1} + \dfrac{\partial f_2'}{\partial \mathbf x_2}\dfrac{\partial \mathbf x_2}{\partial \mathbf x_1} + \sum _{i=3}^n \dfrac{\partial f_2'}{\partial \mathbf x_i} \left( \dfrac{\dif \mathbf x_i}{\dif \mathbf x_2} \dfrac{\partial \mathbf x_2}{\partial \mathbf x_1}  +\sum _{j=3}^i \dfrac{\dif \mathbf x_i}{\dif \mathbf x_j} \dfrac{\partial \mathbf x_j}{\partial \mathbf x_1} \right)\\
&= \dfrac{\dif f_2'}{\dif \mathbf x_2}\dfrac{\partial \mathbf x_2}{\partial \mathbf x_1} + \dfrac{\partial f_2'}{\partial \mathbf x_1} + \sum _{i=3}^n \dfrac{\partial f_2'}{\partial \mathbf x_i} \sum _{j=3}^i \dfrac{\dif \mathbf x_i}{\dif \mathbf x_j} \dfrac{\partial \mathbf x_j}{\partial \mathbf x_1} = 0
\end{aligned}
\end{equation}

Therefore, \(\frac{\partial \mathbf x_2}{\partial \mathbf x_1}\) can be derived as:
\begin{equation}
\dfrac{\partial \mathbf x_2}{\partial \mathbf x_1} = -(\dfrac{\dif f_2'}{\dif \mathbf x_2})^{-1}(\dfrac{\partial f_2'}{\partial \mathbf x_1} + \sum _{i=3}^n \dfrac{\partial f_2'}{\partial \mathbf x_i} \sum _{j=3}^i \dfrac{\dif \mathbf x_i}{\dif \mathbf x_j} \dfrac{\partial \mathbf x_j}{\partial \mathbf x_1})
\end{equation}

 Furthermore, if \(f_2\) is strictly convex in \(\mathbf x_2\), \(\frac{\dif f_2'}{\dif \mathbf x_2}\) represents the Hessian matrix, which is assured to be positive definite. Consequently, the equation can be resolved by applying matrix inverse to the Hessian, which can be accelerated via many algorithms like conjugated gradients \citep{hestenes1952methods}.

\paragraph{Algorithm} We introduce an algorithm (see Algorithm \ref{alg}) designed to compute \(\frac{\dif \mathbf x_i}{\dif \mathbf x_j}\) and \(\frac{\partial \mathbf x_i}{\partial \mathbf x_j}\) for all \(i\) and \(j\). This enables us to determine the gradient \(\frac{\dif f_1}{\dif \mathbf x_1}\) using the chain rule:
\begin{equation}\label{n_var_deriv}
    \dfrac{\dif f_1}{\dif \mathbf x_1} = \sum\limits_{j=1}^n \dfrac{\partial f_1}{\partial \mathbf x_j}\dfrac{\dif \mathbf x_j}{\dif \mathbf x_1}
\end{equation}

\begin{algorithm}
\caption{Computation of $\frac{\dif \mathbf x_i}{\dif \mathbf x_j}, \frac{\partial \mathbf x_i}{\partial \mathbf x_j}$}
\label{alg}
\SetKwInOut{Input}{Input}\SetKwInOut{Output}{Output}
\Input{Current value of the $1$-st level problem $\mathbf x_1$}
\Input{Approximate solution for $(n-1)$th level problem $ \mathbf x_2, \mathbf x_3, \ldots , \mathbf x_n$} 
\Input{Objective function $f_2$}
Fix $\mathbf x_1$, apply $(n-1)$-level \cref{alg} to get: $\frac{\dif \mathbf x_i}{\dif \mathbf x_j}, \frac{\partial \mathbf x_i}{\partial \mathbf x_j}, \forall 2\leq j < i$\;
Fix $\mathbf x_2$, apply $(n-1)$-level \cref{alg} to get: $\frac{\partial \mathbf x_i}{\partial \mathbf x_1}, \forall i \geq 3$\;
$f_2' := \sum_{i=2}^n \frac{\partial f_2}{\partial \mathbf x_i} \frac{\dif \mathbf x_i}{\dif \mathbf x_2}$\;
$f_2'' := \sum_{i=2}^n \frac{\partial f_2'}{\partial \mathbf x_i} \frac{\dif \mathbf x_i}{\dif \mathbf x_2}$\;
$A:= \frac{\partial f_2'}{\partial \mathbf x_1}$\;
\For{$i\leftarrow 3$ \KwTo $n$}{
$A = A + \frac{\partial f_2'}{\partial \mathbf x_i} \sum _{j=3}^i \frac{\dif \mathbf x_i}{\dif \mathbf x_j}\frac{\partial \mathbf x_j}{\partial \mathbf x_1}$\;
}
$\frac{\partial \mathbf x_2}{\partial \mathbf x_1} =\frac{\dif \mathbf x_2}{\dif \mathbf x_1} = -(f_2'')^{-1}A$\;
\For{$i\leftarrow 3$ \KwTo $n$}{
$\frac{\dif \mathbf x_i}{\dif \mathbf x_1} = \sum _{j=2}^i \frac{\dif \mathbf x_i}{\dif \mathbf x_j}\frac{\partial \mathbf x_j}{\partial \mathbf x_1} $\;
}
\end{algorithm}

\section{Theoretical Analysis}\label{sec:theo}
\subsection{Complexity of Calculating Gradients}
Denote $d_i$ as the dimension of vector $\mathbf x_i$, and $d = \max (d_1,...,d_n)$. The calculation of $\frac{\dif f_1}{\dif \mathbf x_1}$ involves calling \cref{alg} and applying the chain rule.

\paragraph{Complexity Analysis of Algorithm \ref{alg}} Let the time complexity of solving an \(n\)-level problem be denoted by \(F(n)\). The primary computational demand of this algorithm arises from the operations in lines 7 and 11, each of which involves matrix multiplication with a complexity of \(\mathcal{O}(d^3)\). This contributes to an overall complexity of \(\mathcal{O}(d^3n^2)\). A straightforward implementation of the \(n\)-level algorithm would naively invoke the \((n-1)\)-level algorithm twice, leading to an exponential increase in complexity. However, when invoking Algorithm \ref{alg} for the second time in line 2, it is unnecessary to repeat the computation of line 1 since its result is already available. Let the time complexity associated with line 2 for an \(n\)-level problem be \(G(n)\).
\begin{align*}
G(n) &= G(n-1) + \mathcal{O}(d^3n^2),\\
F(n) &= F(n-1) + G(n-1) + \mathcal{O}(d^3n^2).
\end{align*}
From this, we deduce that \(F(n) = \mathcal{O}(d^3n^4)\), which reflects the adjusted complexity taking into account the optimization in line 2.

\paragraph{Complexity Analysis of Chain Rule Calculation} The computation of the chain rule necessitates \(n-1\) instances of matrix-vector multiplication. Each of these operations carries a complexity of \(\mathcal{O}(d^2)\), leading to an overall complexity of \(\mathcal{O}(d^2n)\) for the chain rule component.

When considering the total computational expense of computing the gradient, it amounts to \(\mathcal{O}(d^3n^4)\). This represents a significant efficiency improvement over the method outlined in \citet{sato2021gradient}, which is characterized by a complexity of \(\mathcal{O}(d^n n!)\). This comparison highlights the superior efficiency of our approach in handling gradient calculations for multilevel optimization problems.

\subsection{Convergence Analysis}

\subsubsection{Convergence for Multilevel Optimization}
When analyzing the convergence of gradient-based optimization algorithm for multilevel problems, we find it difficult to estimate the difference between the $f_1$ value in step $N$ and the optimal value. To address this, we shifted our focus to the analysis of the average derivatives. Specifically, we aimed to demonstrate that the gradient $\frac{\dif f_1}{\dif \mathbf x_1}$ diminishes progressively throughout the updating process of $\mathbf x_1$. Consequently, we present the following theorem:
\begin{theorem}\label{new_triconverge}
    Assume that $f_i$ for multilevel optimization is continuous with $i$-th order derivatives. If the second-order derivative of $f_1(\mathbf x_1, \mathbf x_2, \cdots, \mathbf x_n)$ with respect to $\mathbf x_1$, i.e. the Hessian matrix $\frac{\dif^2 f_1}{\dif \mathbf x_1^2}$, is positive definite and the maximum eigenvalue over every $\mathbf x_1$ is bounded, then we have

    \[\mathbb E \big(\dfrac{\dif f_1}{\dif \mathbf x_1}\big)^2 \leq \mathcal O\big(\frac{1}{N}\big)\]

\end{theorem}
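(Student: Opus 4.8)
The plan is to recognize the $\mathbf x_1$-update as ordinary gradient descent on the reduced single-level objective obtained after eliminating all lower levels, and then run the standard descent-lemma telescoping argument. Concretely, I would define $\phi(\mathbf x_1) := f_1(\mathbf x_1, \mathbf x_2^*, \ldots, \mathbf x_n^*)$, where each $\mathbf x_i^*$ is the optimal lower-level response as a function of $\mathbf x_1$. By \cref{theorem3} and its $n$-level generalization encoded in \cref{alg}, the quantity $\frac{\dif f_1}{\dif \mathbf x_1}$ computed by the algorithm is exactly $\nabla\phi(\mathbf x_1)$, so the iteration $\mathbf x_1^{k+1} = \mathbf x_1^k - \eta\,\frac{\dif f_1}{\dif \mathbf x_1}\big|_{\mathbf x_1^k}$ is gradient descent on $\phi$. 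I read the conclusion $\mathbb E(\frac{\dif f_1}{\dif \mathbf x_1})^2 \le \mathcal O(1/N)$ as the statement that, for an index $R$ drawn uniformly from $\{0,\ldots,N-1\}$, $\mathbb E_R\,\|\nabla\phi(\mathbf x_1^R)\|^2 = \frac1N\sum_{k=0}^{N-1}\|\nabla\phi(\mathbf x_1^k)\|^2$, which is the quantity I then bound.

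First I would convert the eigenvalue hypothesis into smoothness. Positive definiteness together with the uniform bound $\lambda_{\max}\big(\frac{\dif^2 f_1}{\dif \mathbf x_1^2}\big) \le L$ over all $\mathbf x_1$ gives $0 \prec \frac{\dif^2 f_1}{\dif \mathbf x_1^2} \preceq L I$, so $\nabla\phi$ is $L$-Lipschitz and $\phi$ is $L$-smooth. This yields the descent lemma $\phi(\mathbf y) \le \phi(\mathbf x) + \langle \nabla\phi(\mathbf x), \mathbf y - \mathbf x\rangle + \frac{L}{2}\|\mathbf y - \mathbf x\|^2$. Substituting the update with step size $\eta = 1/L$ produces the per-step decrease
\[
\phi(\mathbf x_1^{k+1}) \le \phi(\mathbf x_1^k) - \frac{1}{2L}\,\big\|\nabla\phi(\mathbf x_1^k)\big\|^2 .
\]
Next I would telescope this inequality from $k=0$ to $N-1$. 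Positive definiteness makes $\phi$ strictly convex, hence bounded below with finite infimum $\phi^\star$, so the sum collapses to $\frac{1}{2L}\sum_{k=0}^{N-1}\|\nabla\phi(\mathbf x_1^k)\|^2 \le \phi(\mathbf x_1^0) - \phi^\star$. Dividing by $N$ gives $\frac1N\sum_{k=0}^{N-1}\|\nabla\phi(\mathbf x_1^k)\|^2 \le \frac{2L(\phi(\mathbf x_1^0) - \phi^\star)}{N} = \mathcal O(1/N)$, which is exactly the claimed bound under the random-index reading of $\mathbb E$.

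The main obstacle I anticipate is not the telescoping, which is routine, but justifying that the differentiated object really is a bona fide $L$-smooth function of $\mathbf x_1$ alone. The Hessian $\frac{\dif^2 f_1}{\dif \mathbf x_1^2}$ is a total second derivative taken through the nested best-response maps $\mathbf x_i^*(\mathbf x_1)$, whose Jacobians arise from \cref{lemma1}, \cref{lemma2} and their recursive extension; the theorem merely postulates its positive definiteness and bounded top eigenvalue, so the delicate step is confirming these hypotheses are consistent with the implicit-function structure (existence and continuity of the relevant inverse Hessians at the lower levels). A second, more practical gap is that \cref{alg} uses only approximate lower-level solutions, so the computed $\frac{\dif f_1}{\dif \mathbf x_1}$ is a biased estimate of $\nabla\phi$; a fully rigorous argument would carry an error term $\varepsilon_k$ through the descent lemma and show it does not spoil the $\mathcal O(1/N)$ rate provided the lower-level accuracy is tightened appropriately. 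I expect these two points — establishing smoothness of the eliminated objective and controlling the inexact-solve bias — to be where the real work lies.
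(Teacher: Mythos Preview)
Your proposal is essentially the same argument as the paper's: reduce to the single-variable objective $\phi(\mathbf x_1)=f_1(\mathbf x_1,\mathbf x_2^*,\ldots,\mathbf x_n^*)$, use the eigenvalue bound on $\frac{\dif^2 f_1}{\dif \mathbf x_1^2}$ to get a descent-lemma inequality per step, telescope, and divide by $N$; the paper phrases the quadratic upper bound via the Lagrange mean-value form of Taylor and the final averaging as a ``Central Limit Theorem,'' but the content is identical to your $L$-smooth descent-lemma computation with the uniform-random-index reading of $\mathbb E$. Your extra caveats about verifying smoothness through the implicit best-response maps and about bias from inexact lower-level solves go beyond what the paper actually proves (it simply assumes the exact Hessian hypothesis and ignores inner-solve error), so the core argument you outline is already at the level of the paper's own proof.
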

Detailed proof can be seen in Appendix \ref{new_triconverge_proof}. In this way, it becomes evident that with an increasing number of update rounds, the expected gradient consistently diminishes. This decline serves as an indication of the algorithm's convergence.
\subsubsection{Convergence for General Cases}
When the domain $S$ of $\mathbf x_1$ is a compact convex set, according to the theorem \ref{new_triconverge}, gradient-based optimization algorithm converges to the optimal value. We point out that in general, the algorithm also converges. For simplicity, we present the results in the case of scalar form.
\begin{theorem}\label{nconverge}
    Assuming that $f_i$ are analytic functions satisfying
    \begin{enumerate}
    \item 
    $|\frac{\partial f_1}{\partial x_i}|\leq N$ and $|\frac{\partial^2 f_1}{\partial x_i^2}|\leq M$ for all $x_i$.
    \item 
    $|\frac{\partial x_i}{\partial x_j}|\leq K$ and $|\frac{\partial^2 x_i}{\partial x_j^2}|\leq S$ for all $j < i$.
    \end{enumerate}
Then we have
\[\mathbb E (\dfrac{\dif f_1}{\dif x_1})^2\leq \dfrac{F_0(\beta - \frac{C\beta^2n^nJ^{n+1}}{2})^{-1}}{N} \sim \mathcal O\big(\frac{n^nJ^{n+1}}{N}\big)\]
where $C$ is a constant and $J$ is $\max\{M,N,K,S,1\}$, a constant related to the Lipschitz constant of various derivatives of $f_i$.
\end{theorem}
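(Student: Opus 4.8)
The plan is to read \cref{nconverge} as a vanilla non-convex gradient-descent guarantee and to reduce it to a single scalar quantity, the smoothness constant $L := \sup_{x_1}\big|\frac{\dif^2 f_1}{\dif x_1^2}\big|$ of $f_1$ regarded as a function of $x_1$ alone through the nested best responses $x_2(x_1),\ldots,x_n(x_1)$. Writing the update as $x_1^{(t+1)} = x_1^{(t)} - \beta\,\frac{\dif f_1}{\dif x_1}\big|_t$ and applying the descent lemma for an $L$-smooth scalar function gives $f_1^{(t+1)} \le f_1^{(t)} - \big(\beta - \tfrac{L\beta^2}{2}\big)\big(\frac{\dif f_1}{\dif x_1}\big|_t\big)^2$. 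Telescoping over $t=0,\ldots,N-1$, bounding $f_1^{(N)}$ below by $\inf f_1$, and setting $F_0 := f_1^{(0)} - \inf f_1$ yields $\big(\beta - \tfrac{L\beta^2}{2}\big)\sum_{t=0}^{N-1}\big(\frac{\dif f_1}{\dif x_1}\big|_t\big)^2 \le F_0$. Dividing by $N$ and reading the uniform average over iterates as the expectation $\mathbb E$ (the random-iterate convention already used in \cref{new_triconverge}) reduces the whole statement to establishing $L \le C\,n^n J^{n+1}$; substituting this $L$ reproduces the claimed $\mathbb E(\frac{\dif f_1}{\dif x_1})^2 \le F_0(\beta - \tfrac{C\beta^2 n^n J^{n+1}}{2})^{-1}/N$, and choosing any step size with $\beta < 2/(C n^n J^{n+1})$ keeps the prefactor positive.

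\textbf{Bounding the smoothness constant.} Everything therefore hinges on estimating $\frac{\dif^2 f_1}{\dif x_1^2}$. Starting from the chain rule \eqref{n_var_deriv}, I would differentiate once more in $x_1$ to obtain two groups of terms: first, $\sum_{j,k}\frac{\partial^2 f_1}{\partial x_k\,\partial x_j}\frac{\dif x_k}{\dif x_1}\frac{\dif x_j}{\dif x_1}$, governed by hypothesis~(1) together with bounds on the first total derivatives; and second, $\sum_j \frac{\partial f_1}{\partial x_j}\frac{\dif^2 x_j}{\dif x_1^2}$, which in addition needs bounds on the second total derivatives. I would control the total derivatives recursively from the path identity \eqref{eq:high}: with $a_i := \sup|\frac{\dif x_i}{\dif x_1}|$, hypothesis~(2) gives $a_i \le K\sum_{j<i} a_j$ and $a_1 = 1$, solved by induction on $i$; differentiating \eqref{eq:high} a second time produces a companion recursion for $b_i := \sup|\frac{\dif^2 x_i}{\dif x_1^2}|$ driven by the second partials $\frac{\partial^2 x_i}{\partial x_j^2}\le S$. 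Substituting these level-by-level estimates into the two groups and summing over the $n$ levels yields a bound of the shape $L \le (\text{number of chain-rule terms})\times(\text{worst per-term product of partials})$, where the per-term product contributes the power $J^{n+1}$ (each of the at most $n{+}1$ partial factors being $\le J$) and the term count contributes the combinatorial factor $n^n$.

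\textbf{Main obstacle.} The hard part is precisely this last bookkeeping: organizing the repeated chain-rule and product-rule expansion of an $n$-fold nested composition so that the number of surviving terms is cleanly majorized by $n^n$ (a Fa\`a di Bruno / path-counting estimate, for which $n^n$ is a convenient over-count of the true combinatorics), while simultaneously verifying that no single fully expanded term carries more than $n+1$ partial-derivative factors, so that the $J$-power does not exceed $J^{n+1}$. Care is needed because the naive path decomposition can multiply two independent chains and thereby inflate both counts; the clean exponents rely on grouping terms by nesting depth and absorbing the remaining lower-order combinatorial factors into the universal constant $C$. Once $L \le C n^n J^{n+1}$ is secured, the remaining steps are the routine telescoping and averaging above, and the two mild standing requirements — that $f_1$ is bounded below so $F_0 < \infty$, and that $\beta$ is small enough that $\beta - \tfrac{C\beta^2 n^n J^{n+1}}{2} > 0$ — complete the argument.
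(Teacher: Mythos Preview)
Your plan is correct and matches the paper's proof: establish the smoothness bound $L\le Cn^nJ^{n+1}$ via recursive estimates on $|\frac{\dif x_i}{\dif x_1}|$ and on the Lipschitz constant of that map (the paper's Lemmas~\ref{dxidx1}--\ref{calM}, which are exactly your $a_i$- and $b_i$-recursions phrased as difference bounds), then apply the descent lemma and telescope. The Fa\`a di Bruno bookkeeping you flag as the main obstacle turns out to be unnecessary --- the paper obtains the $n^nJ^{n+1}$ factor by a plain level-by-level induction with the triangle inequality, so that step is lighter than you anticipate.
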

Detailed proof of theorem \ref{nconverge} can be seen in Appendix \ref{a4}.




\section{Experiments}\label{sec:exp}

In the previous section, we detailed the process for computing the full gradient concerning the first input. However, it remains crucial to evaluate whether this approach of computing the full gradient offers advantages over alternative methodologies. To this end, in the current section, we apply our methods to a meticulously designed experiment and further conduct hyperparameter optimization, following the guidelines and procedures outlined in \citet{sato2021gradient}. This comparative analysis aims to demonstrate the efficacy and efficiency of our gradient computation method within practical application scenarios.

\subsection{Experimental Design}

\paragraph{Generalization of Stackelberg's Model} The Stackelberg model, established in the work of \citet{stackelberg1952theory}, describes a hierarchical oligopoly framework where a dominant firm, referred to as the leader, first decides its output or pricing strategy. Subsequently, the remaining firms, labeled as followers, adjust their strategies accordingly, fully aware of the leader's decisions. We extend this model to a trilevel hierarchy by introducing a secondary leader who is fully informed of the primary leader's actions. The outputs of the first leader, second leader, and follower are represented by \(\mathbf x\), \(\mathbf y\), and \(\mathbf z\), respectively. We consider a demand curve defined by \(P = \mathbf{1 - x - y - z}\), thereby incorporating the interactive dynamics between multiple leaders and a follower within the model's strategic framework.

The loss functions are defined as:
\begin{align*}
f_1(x,y,z) &= -\mathbf{x^\top (1-x-y-z)}, \\
f_2(x,y,z) &= -\mathbf{y^\top (1-x-y-z)}, \\
f_3(x,y,z) &= -\mathbf{z^\top (1-x-y-z)}.
\end{align*}

The optimal solution for this generalized model can be analytically determined as \(\mathbf x = \frac{\mathbf{1}}{2}\).

\paragraph{Hyperparameter optimization} In this experiment, we follow \citet{sato2021gradient} and consider an adversarial scenario where a learner optimizes the hyperparameter $\lambda$ to derive a noise-robust model, while an attacker poisons the training data to make it inaccurate. The model is formulated as follows:
\begin{align*}
f_1(\lambda, P,\theta) &= \frac{1}{m} ||y_{val} - X_{val}\theta||_2^2, \\
f_2(\lambda, P,\theta) &= -\frac{1}{n} ||y_{train} \! - \! (\! X_{train} \! + \! P)\theta||_2^2 + \frac{c}{nd}||P'||_2^2, \\
f_3(\lambda, P,\theta) &= \frac{1}{n} ||y_{train} \! - \! ( \! X_{train} \! + \! P)\theta||_2^2 \! + \! \exp (\lambda) \frac{||\theta||_{1^*}}{d}
\end{align*}

Where \(X_{train}\) and \(X_{val}\) represent the feature sets of the training and validation datasets, respectively, and \(y_{train}\) and \(y_{val}\) denote the corresponding target values for these datasets. The symbols \(m\) and \(n\) indicate the sizes of the validation and training datasets, respectively, while \(d\) represents the dimensionality of the features, and \(c\) signifies the penalty imposed on the attacker. For a
comprehensive understanding, we recommend readers refer to \citet{sato2021gradient}. Following the methodology outlined in \citet{sato2021gradient}, we set \(m=100\) for the size of the validation dataset, \(n=40\) for the size of the training dataset, and \(c=100\) as the penalty for the attacker.

\paragraph{Experiment Setting} In our experimental setup, we approach the solution of the low-level optimization problem following \citet{sato2021gradient}. Specifically, this entails performing \(30\) updates on \(\bf y\) for every update on \(\bf x\), and \(3\) updates on \(\bf z\) for every update on \(\bf y\). We employ the Adam optimizer \citep{kingma2017adam} for \(\mathbf x\) with \(\beta_1 = 0.5\), \(\beta_2 = 0.999\). For faster convergence, we set the learning rate at step $t$ to $0.1\times 0.99^t$. For \(\bf y\) and \(\bf z\), we utilize standard gradient descent, setting all learning rates to \(10^{-2}\). Specifically in our method, we use $3$ iterations of conjugate gradient method \citep{hestenes1952methods} to get rid of the matrix inverse in line 9 of \cref{alg}.

To assess the efficacy of our approach, we draw comparisons with Vanilla Gradient Descent (VGD) (which involves taking only \(\bf x\)'s partial derivative), Finite Difference (FD) \citep{liu2018darts}, and ITD \citep{sato2021gradient}. For the first experiment, which has an analytical solution, we measure performance based on the mean square error (MSE) against the ground truth. In the second experiment, we further conduct an inference run, optimizing \(\bf y\) and \(\bf z\) until convergence, and subsequently report on the numerical evaluation of \(f_1(\mathbf{x}, \mathbf{y}^*, \mathbf{z}^*)\). All the experiments are run on a single A100 GPU.

\subsection{Experimental Results}

\paragraph{Generalized Stackelberg's model}. 

We test all four methods on this model. The result is shown in \cref{compare}, where different curves correspond to different methods with $x$-axis representing the optimization step of $x$, and the $y$-axis representing the MSE to ground truth. The empirical findings verify the effectiveness of our proposed method, and clearly demonstrate the advantage compared to other alternatives.

\begin{figure}[ht]
    \centering
    \includegraphics[width=0.45\linewidth]{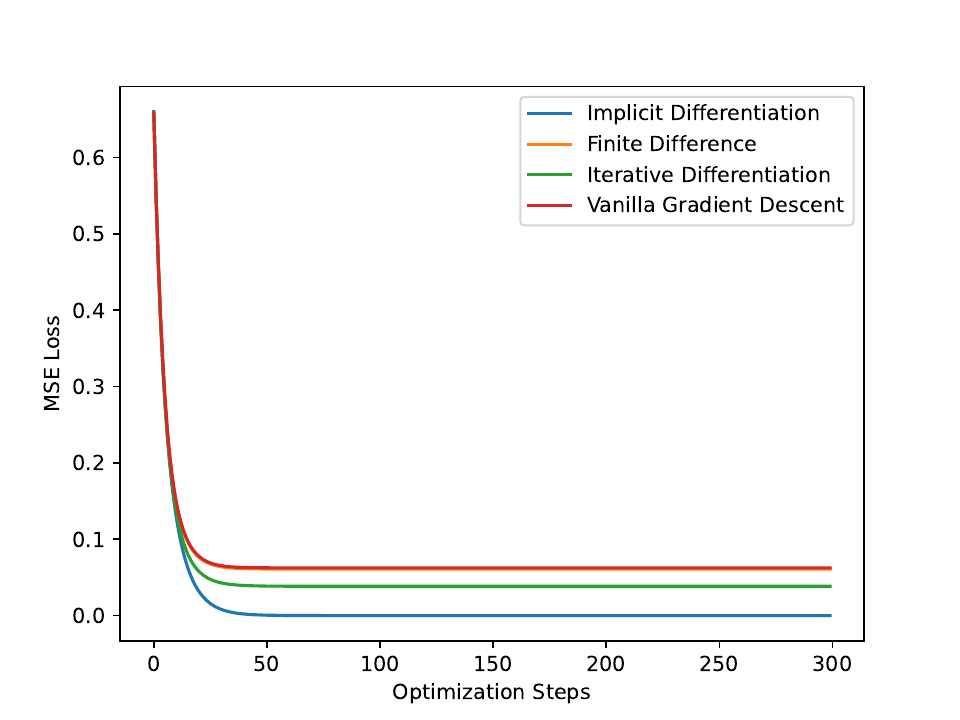}
    \caption{The MSE in Generalized Stackelberg's model. Our method converges significantly faster than all the alternatives, and is the only method that do not fall into local minima.}
    \label{compare}
\end{figure}

\begin{figure}[ht]
    \centering
    \begin{subfigure}[b]{0.4\textwidth}
         \centering
         \includegraphics[width=\textwidth]{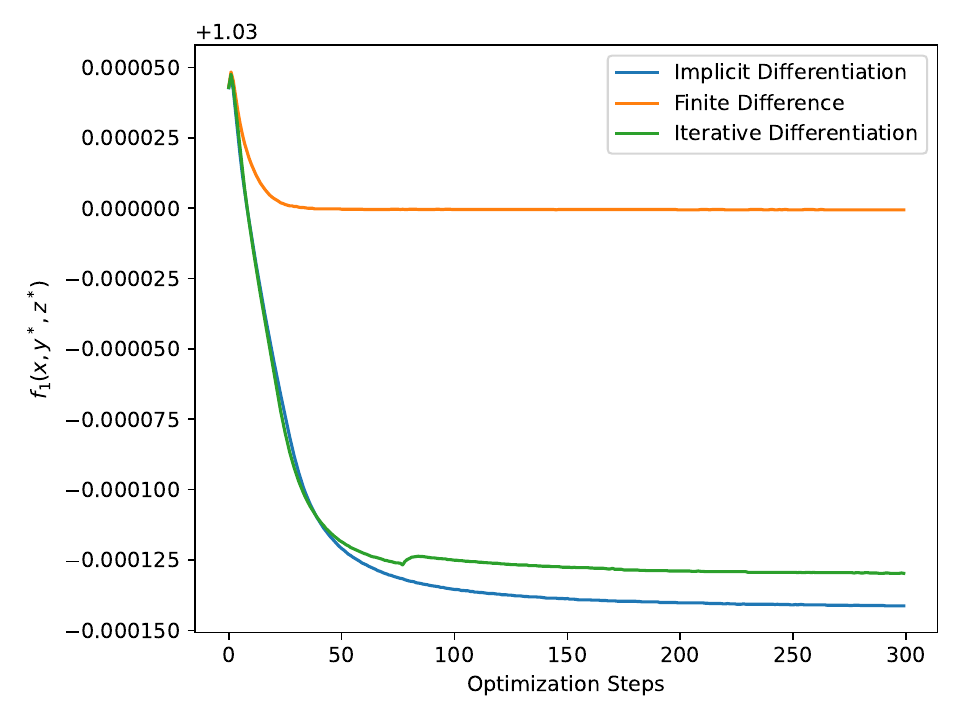}
         \caption{Red wine quality dataset}
     \end{subfigure} 
     \begin{subfigure}[b]{0.4\textwidth}
         \centering
         \includegraphics[width=\textwidth]{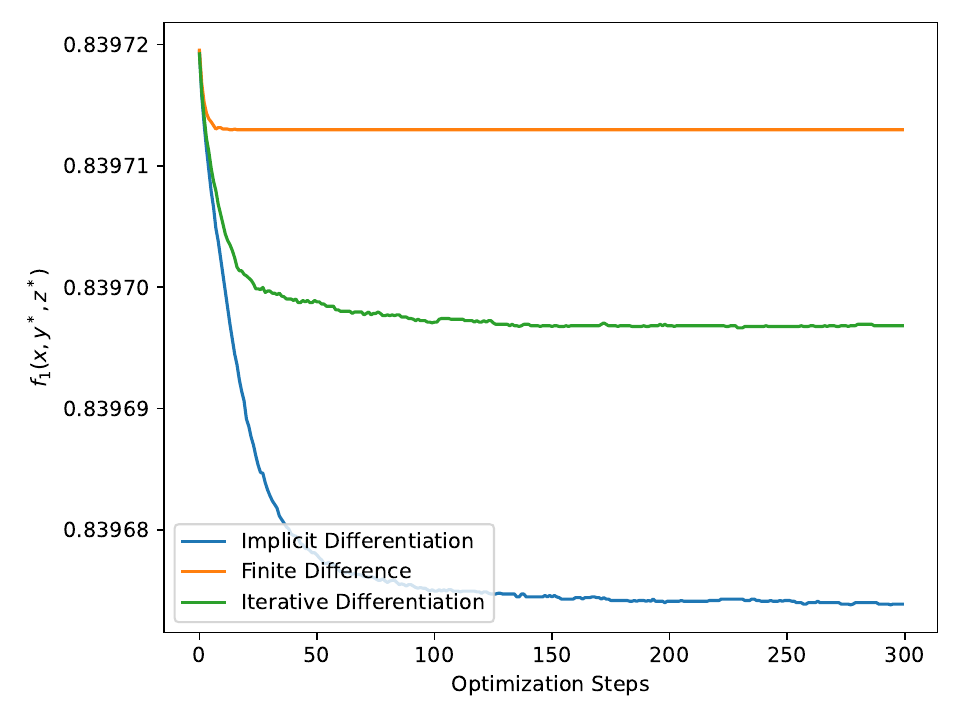}
         \caption{White wine quality dataset}
     \end{subfigure}
    \caption{In the task of hyperparameter optimization, both the FD method and the ITD approach tend to converge to local minima.}
    \label{hyper}
\end{figure}


\paragraph{Hyperparameter optimization}

\begin{table}[ht]
\caption{Ratio of computational time relative to the vanilla method in Hyperparameter Optimization. This clearly demonstrates the efficiency of our method compared to \citet{sato2021gradient}.}
\begin{center}
\begin{small}
\begin{sc}
\begin{tabular}{|c|c|c|c|c|}
\hline
    Algorithm & VGD & FD & ITD  & ID (ours) \\ 
\hline
    Time per update & 1 & 2.0 & 10.3  & 3.1 \\ 
\hline
\end{tabular}
\end{sc}
\end{small}
\end{center}
\label{tab:t}
\end{table}

We evaluated these methods on regression tasks using the red and white wine quality datasets, as described in \citet{cortez2009modeling}, adopting the approach outlined in \citet{sato2021gradient}. Each feature and target within the datasets was normalized prior to analysis. The outcomes of this evaluation are presented in \cref{hyper}. We omitted the VGD method from our testing because the loss function \(f_1\) does not explicitly include \(\lambda\). Unlike other optimization algorithms that tend to get trapped in local minima, our method exhibits consistent improvement across iterations, surpassing the performance of the baseline methods. Additionally, the ratio of computational time relative to the vanilla method is presented in \cref{tab:t}.

\section{Related Work}
A majority of work in this field tackle on the simplified bilevel cases. Our work draw inspiration from \citet{gould2016differentiating}, which belongs to the school of first-order methods. Since then, more modern techniques have been proposed with stochastic nature as momentum recursive \citep{cutkosky2019momentum, tran2019hybrid} or variance reduction \citep{nguyen2017sarah, li2018simple} for optimizing computational complexity. Inspired by the technique leveraged in solving bilevel optimization problems, a wide array of machine learning applications flourish, such as hyperparameter optimization \citep{bennett2006model, franceschi2017forward, lorraine2020optimizing}, reinforcement learning \citep{konda1999actor, rajeswaran2020game, hong2023two}, neural architecture search \citep{liu2018darts, zhang2021idarts}, meta-learning \citep{finn2017model, franceschi2018bilevel, rajeswaran2019meta}, and so on. Yet these works still leave a lot of room for improvement. For example, \citet{liu2018darts} leverages finite difference for neural architecture search, which is in essence a zero-order optimization method not utilizing the information to its full.

Recently more attempts have been made towards trilevel or multilevel problems \citep{tilahun2012new, shafiei2021trilevel}. Inspired by the success of iterative differentiation in bilevel optimization, \citet{sato2021gradient} proposed an approximate gradient-based method with theoretical guarantee. Nevertheless, the convergence analysis only holds asymptotically, hence requiring a considerably large iteration number in lower level, making it unsuitable for practical applications. There have also been efforts on the engineering side \citep{grefenstette2019generalized, blondel2022efficient}. \citet{blondel2022efficient} proposed a modular framework for implicit differentiation, but failed to support multilevel optimization due to override of JAX's automatic differentiation, which integrates the chain rule. \citet{choe2022betty} developed a software library for efficient automatic differentiation of multilevel optimization, but differs from our focus in that we propose the method for composing best-response Jacobian, which serves as the premise of their work.

\section{Conclusion}
\paragraph{Summary}
In this study, we introduce an automatic differentiation method tailored for the gradient computation of trilevel optimization problems. We also develope a recursive algorithm designed for differentiation in general multilevel optimization scenarios, providing theoretical insights into the convergence rates applicable to both trilevel configurations and broader \(n\)-level contexts. Through comprehensive testing, our approach has been demonstrated to outperform existing methods significantly.

\paragraph{Limitations and Future Directions}
The effectiveness of our algorithm is most evident within simpler systems and when handling a smaller number of levels \(n\). While it yields relatively precise gradients for smooth and convex objective functions, challenges arise in optimizing non-convex functions, potentially leading to non-invertible Hessian matrices. Moreover, the efficiency of our gradient computation diminishes as the number of levels \(n\) and the dimensionality of the features \(d\) increase, due to the computational complexity scaling at \(\mathcal{O}(d^3n^4)\). Identifying heuristic methods to approximate gradients more efficiently remains for future exploration.

\bibliographystyle{abbrvnat}
\bibliography{neurips_2024}

\newpage
\appendix

\section{Theoretical Proofs}


\textbf{The proof of lemma \ref{lemma_31}:} \label{2.1}
Let $f:\mathbb R^{m+n}\to \mathbb R$ be a continuous function with second derivatives and $\det (\frac{\partial ^2 f}{\partial \bf{y} ^2} ) \neq 0$. Let $\bf{g}(\bf{x}) = \argmin _{\bf y} f(\bf x, \bf y)$. Then the derivative of $\bf g$ with respect to $\bf x$ is
$$\dfrac{\dif \mathbf{g}(\mathbf{x})}{\dif \mathbf{x}} = - f_{YY}(\mathbf{x}, \mathbf{g}(\mathbf{x}))^{-1} f_{XY}(\mathbf{x}, \mathbf{g}(\mathbf{x}))$$

\begin{proof}
\label{2.2}
First, since \( \mathbf{g}(\mathbf{x}) \) minimizes \( f \) with respect to \( \mathbf{y} \), the first order condition gives:
\[ \frac{\partial f}{\partial \mathbf{y}}(\mathbf{x}, \mathbf{g}(\mathbf{x})) = 0. \]

Differentiating this condition with respect to \( \mathbf{x} \) and applying the chain rule, we obtain:
\[ \frac{\mathrm{d}}{\mathrm{d} \mathbf{x}} \left(\frac{\partial f}{\partial \mathbf{y}}(\mathbf{x}, \mathbf{g}(\mathbf{x}))\right) = 0. \]

This differentiation can be expanded as:
\[ \frac{\partial^2 f}{\partial \mathbf{x} \partial \mathbf{y}} + \frac{\partial^2 f}{\partial \mathbf{y}^2} \frac{\mathrm{d} \mathbf{g}}{\mathrm{d} \mathbf{x}} = 0. \]

Here, \( \frac{\partial^2 f}{\partial \mathbf{x} \partial \mathbf{y}} \) represents the matrix of mixed second partial derivatives of \( f \) with respect to \( \mathbf{x} \) and \( \mathbf{y} \), and \( \frac{\partial^2 f}{\partial \mathbf{y}^2} \) is the Hessian matrix of \( f \) with respect to \( \mathbf{y} \).

To solve for \( \frac{\mathrm{d} \mathbf{g}}{\mathrm{d} \mathbf{x}} \), rearrange the above equation:
\[ \frac{\partial^2 f}{\partial \mathbf{y}^2} \frac{\mathrm{d} \mathbf{g}}{\mathrm{d} \mathbf{x}} = -\frac{\partial^2 f}{\partial \mathbf{x} \partial \mathbf{y}}. \]

Assuming \( \frac{\partial ^2 f}{\partial \mathbf{y}^2} \) is invertible (as implied by $ \det \left(\frac{\partial ^2 f}{\partial \mathbf{y}^2}\right) \neq 0 $), we multiply both sides by the inverse of this matrix:
$$ \frac{\mathrm{d} \mathbf{g}}{\mathrm{d} \mathbf{x}} = -\left(\frac{\partial ^2 f}{\partial \mathbf{y}^2}\right)^{-1} \frac{\partial ^2 f}{\partial \mathbf{x} \partial \mathbf{y}} $$

This completes the proof.
\end{proof}

\textbf{The proof of lemma \ref{lemma1}:}

Let $f: \mathbb{R}^{d_1+d_2+d_3} \to \mathbb{R}$ be a continuous function with second derivatives. Let $\mathbf{g(x,y)} = \argmin_{\mathbf{z} \in \mathbb{R}^{d_3}}f(\mathbf{x,y,z})$, then the following properties about $g$ holds:\\
(a)
$$\dfrac{\partial \mathbf{g}}{\partial \mathbf{x}} = -f_{ZZ}^{-1}f_{XZ} $$
(b)
$$\dfrac{\partial \mathbf{g}}{\partial \mathbf{y}} = -f_{ZZ}^{-1}f_{YZ} $$

\begin{proof}\label{3.1}
To find \( \frac{\partial \mathbf{g}}{\partial \mathbf{x}} \), differentiate the first order condition with respect to \( \mathbf{x} \) using the chain rule:
\[ \frac{\mathrm{d}}{\mathrm{d} \mathbf{x}} \left(\frac{\partial f}{\partial \mathbf{z}}(\mathbf{x, y, g(x, y)})\right) = 0. \]

Expanding this using the chain rule yields:
\[ \frac{\partial^2 f}{\partial \mathbf{x} \partial \mathbf{z}} + \frac{\partial^2 f}{\partial \mathbf{z}^2} \frac{\partial \mathbf{g}}{\partial \mathbf{x}} = 0. \]

Solving for \( \frac{\partial \mathbf{g}}{\partial \mathbf{x}} \):
\[ \frac{\partial \mathbf{g}}{\partial \mathbf{x}} = -\left(\frac{\partial ^2 f}{\partial \mathbf{z} ^2}\right)^{-1} \frac{\partial ^2 f}{\partial \mathbf{x} \partial \mathbf{z}}. \]

The proof for part (b) is similar.
\end{proof}

\textbf{The proof of lemma \ref{lemma2}:}

Let $f: \mathbb{R}^{d_1+d_2+d_3} \to \mathbb{R}$ and $\mathbf{g}: \mathbb{R}^{d_1+d_2} \to \mathbb{R}^{d_3}$ be continuous functions with second derivatives. Let $\mathbf{h(x)} = \argmin_{y\in \mathbb{R}}f(\mathbf{x,y,g(x,y)})$, then the derivative of $\bf h$ with respect to $\bf x$ is:
\begin{multline*}
    \dfrac{\partial \mathbf{h}}{\partial \mathbf{x}} = -\left(f_{YY} + 2f_{ZY} \dfrac{\partial \mathbf{g}}{\partial \mathbf{y}}  + (\dfrac{\partial \mathbf{g}}{\partial \mathbf{y}})^\top f_{ZZ} \dfrac{\partial \mathbf{g}}{\partial \mathbf{y}} + \sum _{i=1}^{d_3} (f_Z)_i \dfrac{\partial}{\partial \mathbf{y}} (\dfrac{\partial \mathbf{g}}{\partial \mathbf{y}})_i\right)^{-1}\\
    \left(f_{XY} + f_{ZY} \dfrac{\partial\mathbf{g}}{\partial \mathbf x}  + (\dfrac{\partial \mathbf g}{\partial \mathbf y})^\top f_{XZ} +  (\dfrac{\partial \mathbf g}{\partial \mathbf y})^\top f_{ZZ}\dfrac{\partial \mathbf g}{\partial \mathbf x} + \sum_{i=1}^{d_3} (f_Z)_i \dfrac{\partial}{\partial \mathbf x}(\dfrac{\partial \mathbf g}{\partial \mathbf y})_i\right )
\end{multline*}

\begin{proof}\label{3.2}
First, we establish the condition for \( \mathbf{h}(\mathbf{x}) \) by setting the gradient of \( f \) with respect to \( \mathbf{y} \) equal to zero, considering the implicit function defined by \( \mathbf{g} \). The condition \( \frac{\partial f}{\partial \mathbf{y}} + \frac{\partial f}{\partial \mathbf{z}} \frac{\partial \mathbf{g}}{\partial \mathbf{y}} = 0 \) must hold at the minimizer \( \mathbf{y} = \mathbf{h}(\mathbf{x}) \).

Differentiating this condition with respect to \( \mathbf{x} \) gives:
\[
\frac{\mathrm{d}}{\mathrm{d} \mathbf{x}} \left( \frac{\partial f}{\partial \mathbf{y}} + \frac{\partial f}{\partial \mathbf{z}} \frac{\partial \mathbf{g}}{\partial \mathbf{y}} \right) = 0. \]

Expanding this derivative using the product and chain rules, we obtain:
\begin{multline*}
f_{XY} + f_{YY} \frac{\partial \mathbf{h}}{\partial \mathbf{x}} + f_{YZ} \left( \frac{\partial \mathbf{g}}{\partial \mathbf{x}} + \frac{\partial \mathbf{g}}{\partial \mathbf{y}} \frac{\partial \mathbf{h}}{\partial \mathbf{x}} \right) +\\
\frac{\partial \mathbf{g}}{\partial \mathbf{y}} \left( f_{XY} + f_{YZ} \frac{\partial \mathbf{h}}{\partial \mathbf{x}} + f_{ZZ} \left( \frac{\partial \mathbf{g}}{\partial \mathbf{x}} + \frac{\partial \mathbf{g}}{\partial \mathbf{y}} \frac{\partial \mathbf{h}}{\partial \mathbf{x}} \right) \right) + f_Z \left( \frac{\partial^2 \mathbf{g}}{\partial \mathbf{x} \partial \mathbf{y}} + \frac{\partial^2 \mathbf{g}}{\partial \mathbf{y}^2} \frac{\partial \mathbf{h}}{\partial \mathbf{x}} \right) = 0.
\end{multline*}

Rearranging terms and collecting like terms involving \( \frac{\partial \mathbf{h}}{\partial \mathbf{x}} \), we find:
\begin{multline*}
\left(f_{YY} + 2f_{ZY} \dfrac{\partial \mathbf{g}}{\partial \mathbf{y}}  + (\dfrac{\partial \mathbf{g}}{\partial \mathbf{y}})^\top f_{ZZ} \dfrac{\partial \mathbf{g}}{\partial \mathbf{y}} + \sum _{i=1}^{d_3} (f_Z)_i \dfrac{\partial}{\partial \mathbf{y}} (\dfrac{\partial \mathbf{g}}{\partial \mathbf{y}})_i\right) \frac{\partial \mathbf{h}}{\partial \mathbf{x}}\\
= -\left(f_{XY} + f_{ZY} \dfrac{\partial\mathbf{g}}{\partial \mathbf x}  + (\dfrac{\partial \mathbf g}{\partial \mathbf y})^\top f_{XZ} +  (\dfrac{\partial \mathbf g}{\partial \mathbf y})^\top f_{ZZ}\dfrac{\partial \mathbf g}{\partial \mathbf x} + \sum_{i=1}^{d_3} (f_Z)_i \dfrac{\partial}{\partial \mathbf x}(\dfrac{\partial \mathbf g}{\partial \mathbf y})_i\right )
\end{multline*}

Inverting the matrix on the left-hand side to solve for \( \frac{\partial \mathbf{h}}{\partial \mathbf{x}} \), we obtain:

\begin{multline*}
    \dfrac{\partial \mathbf{h}}{\partial \mathbf{x}} = -\left(f_{YY} + 2f_{ZY} \dfrac{\partial \mathbf{g}}{\partial \mathbf{y}}  + (\dfrac{\partial \mathbf{g}}{\partial \mathbf{y}})^\top f_{ZZ} \dfrac{\partial \mathbf{g}}{\partial \mathbf{y}} + \sum _{i=1}^{d_3} (f_Z)_i \dfrac{\partial}{\partial \mathbf{y}} (\dfrac{\partial \mathbf{g}}{\partial \mathbf{y}})_i\right)^{-1}\\
    \left(f_{XY} + f_{ZY} \dfrac{\partial\mathbf{g}}{\partial \mathbf x}  + (\dfrac{\partial \mathbf g}{\partial \mathbf y})^\top f_{XZ} +  (\dfrac{\partial \mathbf g}{\partial \mathbf y})^\top f_{ZZ}\dfrac{\partial \mathbf g}{\partial \mathbf x} + \sum_{i=1}^{d_3} (f_Z)_i \dfrac{\partial}{\partial \mathbf x}(\dfrac{\partial \mathbf g}{\partial \mathbf y})_i\right )
\end{multline*}

This expression for \( \frac{\partial \mathbf{h}}{\partial \mathbf{x}} \) effectively describes how the minimizer \( \mathbf{h}(\mathbf{x}) \) changes as a function of \( \mathbf{x} \). The matrix inversion reflects the dependency of the minimization outcome on the curvature of \( f \) with respect to \( \mathbf{y} \) and the interactions between \( f \) and \( \mathbf{g} \) via their cross-derivatives. The derivative \( \frac{\partial \mathbf{h}}{\partial \mathbf{x}} \) encapsulates the net effect of these interactions and the underlying geometry of the function \( f \) as modified by the mapping \( \mathbf{g} \).
\end{proof}

\textbf{The proof of theorem \ref{new_triconverge}:}
Assume that $f_i$ for multilevel optimization is continuous with $i$-th order derivatives. If the second-order derivative of $f_1(\mathbf x_1, \mathbf x_2, \cdots, \mathbf x_n)$ with respect to $\mathbf x_1$, i.e. the Hessian matrix $\frac{\dif^2 f_1}{\dif \mathbf x_1^2}$, is positive definite and the maximum eigenvalue over every $\mathbf x_1$ is bounded, then we have

    \[\mathbb E \big(\dfrac{\dif f_1}{\dif \mathbf x_1}\big)^2 \leq \mathcal O\big(\frac{1}{N}\big)\]
\begin{proof}\label{new_triconverge_proof}
The basic proof approach is to compute the difference of $f_1$ value between two steps using Lagrange Mean Value Theorem and note that $f_1$ is actually a function of $\mathbf x_1$ because other variables $\mathbf x_2, \cdots, \mathbf x_n$ are actually functions of $\mathbf x_1$. For simplicity, we use $\mathbf x^i$ to represent $\mathbf x^i(\mathbf x_1^i)=(\mathbf x_1^i, \mathbf x_2^i(\mathbf x_1^i), \cdots, \mathbf x_n^i(\mathbf x_1^i,\cdots, \mathbf x_{n-1}^i))$ , where $\mathbf x_2^i,\cdots, \mathbf x_n^i$ are optimal value as discussed in the $i$-th round. Then the difference will be
    \begin{align*}
        &f_1(\mathbf x^{i+1})- f_1(\mathbf x^{i})\\
        =&f_1\big(\mathbf x_1^{i+1},\cdots,\mathbf x_n^{i+1}(\mathbf x_1^{i+1},\cdots, \mathbf x_{n-1}^{i+1})\big) - f_1\big(\mathbf x_1^{i},\cdots,\mathbf x_n^{i}(\mathbf x_1^{i},\cdots, \mathbf x_{n-1}^{i})\big)\\
        =& \frac{\dif f_1}{\dif \mathbf x_1}(\mathbf x^i)\cdot(\mathbf x_1^{i+1} - \mathbf x_1^i) + \frac{1}{2}(\mathbf x_1^{i+1} - \mathbf x_1^i)^T\frac{\dif^2 f_1}{\dif \mathbf x_1^2}(\mathbf \xi)(\mathbf x_1^{i+1} - \mathbf x_1^i)\\
    \end{align*}
where $\mathbf \xi$ is some value between $\mathbf x^i$ and $\mathbf x^{i+1}$ according to Lagrange Mean Value Theorem. 

The Hessian matrix $\dfrac{\dif^2 f_1}{\dif \mathbf x_1^2}$ is positive definite, so we have
\begin{align*}
    (\mathbf x_1^{i+1} - \mathbf x_1^i)^T\frac{\dif^2 f_1}{\dif \mathbf x_1^2}(\mathbf \xi)(\mathbf x_1^{i+1} - \mathbf x_1^i) &\leq \lambda_{max} (\mathbf x_1^{i+1} - \mathbf x_1^i)^T(\mathbf x_1^{i+1} - \mathbf x_1^i)\\
    &=\lambda_{max} (\mathbf x_1^{i+1} - \mathbf x_1^i)^2
\end{align*}
where $\lambda_{max}$ is a bound of maximum eigenvalue over every $\mathbf x_1$ of the matrix $\dfrac{\dif^2 f_1}{\dif \mathbf x_1^2}$, i.e.
\[\lambda\big(\dfrac{\dif^2 f_1}{\dif \mathbf x_1^2}(\mathbf x(\mathbf x_1, \cdots, \mathbf x_n))\big) \leq \lambda_{max}\ \forall \mathbf x_1\]
Note that we update $\mathbf x_1$ by 
\[\mathbf x_1 \leftarrow \mathbf x_1 - \beta\frac{\dif f_1}{\dif \mathbf x_1}(\mathbf x_1,\mathbf x_2, \cdots, \mathbf x_n)\]
we can replace $\mathbf x^{i+1} - \mathbf x^i$:
\begin{align*}
        &f_1(\mathbf x^{i+1})- f_1(\mathbf x^{i})\\
        =& \frac{\dif f_1}{\dif \mathbf x_1}(\mathbf x^i)\cdot(\mathbf x_1^{i+1} - \mathbf x_1^i) + \frac{1}{2}(\mathbf x_1^{i+1} - \mathbf x_1^i)^T\frac{\dif^2 f_1}{\dif \mathbf x_1^2}(\mathbf \xi)(\mathbf x_1^{i+1} - \mathbf x_1^i)\\
        \leq &\frac{\dif f_1}{\dif \mathbf x_1}(\mathbf x^i)\cdot(\mathbf x_1^{i+1} - \mathbf x_1^i) + \frac{1}{2}\lambda_{max}(\mathbf x_1^{i+1} - \mathbf x_1^i)^2\\
        =&-\beta \big(\frac{\dif f_1}{\dif \mathbf x_1}(\mathbf x^i)\big)^2 + \frac{1}{2}\lambda_{max}\beta^2 \big(\frac{\dif f_1}{\dif \mathbf x_1}(\mathbf x^i)\big)^2\\
        =&\big(\frac{\lambda_{max}\beta^2}{2} - \beta\big)\big(\frac{\dif f_1}{\dif \mathbf x_1}(\mathbf x^i)\big)^2
    \end{align*}
So we get
\begin{align*}
    f_1(\mathbf x^{i+1})- f_1(\mathbf x^{i})
        &=f_1\big(\mathbf x_1^{i+1},\cdots,\mathbf x_n^{i+1}(\mathbf x_1^{i+1},\cdots, \mathbf x_{n-1}^{i+1})\big) - f_1\big(\mathbf x_1^{i},\cdots,\mathbf x_n^{i}(\mathbf x_1^{i},\cdots, \mathbf x_{n-1}^{i})\big)\\
        &\leq \big(\frac{\lambda_{max}\beta^2}{2} - \beta\big)\big(\frac{\dif f_1}{\dif \mathbf x_1}(\mathbf x^i)\big)^2
\end{align*}
After summing both sides, we find that the sum of the squared derivatives for each step can be bounded by a constant which is related to the initial point we select:
\begin{align*}
     &\big(\beta - \frac{\beta^2\lambda_{max}}{2}\big)\sum_{i=0}^{N-1}\big(\frac{\dif f_1}{\dif \mathbf x_1}(\mathbf x^i)\big)^2\\
        \leq& f_1(\mathbf x^0) - f_1(\mathbf x^N)\\
        \leq& f_1(\mathbf x^0)
\end{align*}
So when $\beta\leq\frac{2}{\lambda_{max}}$, the expectation of the squared derivative can be approximated as
\[\mathbb E (\dfrac{\dif f_1}{\dif \mathbf x_1})^2 \sim \dfrac{1}{N}\sum_{i=0}^{N-1}\big(\dfrac{\dif f_1}{\dif \mathbf x_1}\big(\mathbf x^i)\big)^2 \leq \dfrac{1}{N}\big(\beta - \frac{\beta^2\lambda_{max}}{2}\big)^{-1}f_1(\mathbf x_0) = \mathcal O(\frac{1}{N})\]
by Central Limit Theorem.
\end{proof}
\textbf{The proof of theorem \ref{nconverge}:}

    Assume that $f_i$ are analytic functions satisfying
    \begin{enumerate}
    \item 
    $|\frac{\partial f_1}{\partial x_i}|\leq N$ for all $x_i$.
    \item 
$|\frac{\partial^2 f_1}{\partial x_i^2}|\leq M$ for all $x_i$.
    \item 
$|\frac{\partial x_i}{\partial x_j}|\leq K$ for all $j < i$.
    \item 
    $|\frac{\partial^2 x_i}{\partial x_j^2}|\leq S$ for all $j < i$.
\end{enumerate}
we have
\[\mathbb E (\dfrac{\dif f_1}{\dif x_1})^2\leq \dfrac{F_0(\beta - \frac{C\beta^2n^nJ^{n+1}}{2})^{-1}}{N} \sim \mathcal O\big(\frac{n^nJ^{n+1}}{N}\big)\]
Here $F_0$ is the value of $f_1$ at the initial point, $J$ is $\max\{M,N,K,S,1\}$, $\beta$ is the step size when we update $x_1$, $n$ is the number of layers, $N$ is the running rounds, and $C$ is a constant. And $x_2, x_3, \cdots, x_n$ are determined by the following system of equations:
$$\begin{cases}\label{equations}
    &x_n^\star  = x_n(x_1,\cdots, x_{n-1}) = \arg\min_{x_n}f_n(x_1,\cdots, x_n)\\
    &x_{n-1}^\star = x_{n-1}(x_1,\cdots, x_{n-2}) = \arg\min_{x_{n-1}}f_{n-1}(x_1,\cdots, x_n^\star)\\
    &\vdots\\
    &x_3^\star = x_3(x_1, x_2) = \arg\min_{x_3}f_3(x_1,\cdots, x_n^*)\\
    &x_2^\star = x_2(x_1) = \arg\min_{x_2}f_2(x_1, x_2, x_3^\star\cdots, x_n^\star)
\end{cases}$$
The $\frac{\partial x_i}{\partial x_j}$ and $\frac{\partial^2 x_i}{\partial x_j^2}$ can be obtained like what we do in lemma \ref{lemma1} and lemma \ref{lemma2}.

We begin with several lemmas:
\begin{lemma}\label{dxidx1}
    When assumption 3 holds, we have
    \[|\dfrac{\dif x_i}{\dif x_1}(x)|\leq \mathcal O(((i-1)J)^{i-1})\]
    for any $x$.
\end{lemma}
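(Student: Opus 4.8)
The plan is to prove the bound by strong induction on $i$, reading the first equality of \cref{eq:high} as a recursion that expresses the full derivative $\frac{\dif x_i}{\dif x_1}$ in terms of the lower-index full derivatives $\frac{\dif x_j}{\dif x_1}$ with $j<i$. Writing $a_i := |\frac{\dif x_i}{\dif x_1}(x)|$ and noting the seed $a_1 = |\frac{\dif x_1}{\dif x_1}| = 1$, \cref{eq:high} reads $\frac{\dif x_i}{\dif x_1} = \sum_{1\le j<i}\frac{\partial x_i}{\partial x_j}\frac{\dif x_j}{\dif x_1}$. Since every partial $\frac{\partial x_i}{\partial x_j}$ with $j<i$ is bounded by $K\le J$ (the third assumption, together with $J=\max\{M,N,K,S,1\}\ge 1$), the triangle inequality collapses this to the scalar recursion $a_i \le J\sum_{j=1}^{i-1} a_j$.

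The base cases are immediate: $a_1 = 1 = (0\cdot J)^0$ and $a_2 \le J a_1 = J = (1\cdot J)^1$, both matching the claimed bound. For the inductive step I assume $a_j \le ((j-1)J)^{j-1}$ for all $j<i$, bound the sum of $i-1$ terms by $(i-1)$ copies of its largest summand (attained at $j=i-1$ because $J\ge 1$), and obtain $a_i \le J(i-1)((i-2)J)^{i-2} = (i-1)(i-2)^{i-2}J^{i-1}$. It then remains only to verify the elementary inequality $(i-1)(i-2)^{i-2}\le (i-1)^{i-1}$, which follows from $(i-2)^{i-2}\le (i-1)^{i-2}$. This closes the induction at $a_i\le ((i-1)J)^{i-1}$ and yields the stated $\mathcal{O}(((i-1)J)^{i-1})$ bound.

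The argument is essentially routine once the recursion is set up, so the only care needed is bookkeeping: confirming that the $j=1$ term contributes $\frac{\partial x_i}{\partial x_1}\cdot\frac{\dif x_1}{\dif x_1} = \frac{\partial x_i}{\partial x_1}$ with $\frac{\dif x_1}{\dif x_1}=1$, and using $J\ge 1$ both to replace $K$ by $J$ and to locate the dominant summand. A cleaner but lossier alternative is to interpret \cref{eq:high} as a sum over the $2^{i-2}$ strictly increasing paths from $1$ to $i$, each a product of at most $i-1$ partials bounded by $J$; this gives the stronger $a_i\le 2^{i-2}J^{i-1}$, which also lies within $\mathcal{O}(((i-1)J)^{i-1})$. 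I do not expect a genuine obstacle here; the looser $(i-1)^{i-1}$ form is presumably chosen so that this lemma feeds cleanly into the companion bounds on second derivatives such as $\frac{\dif^2 x_i}{\dif x_1^2}$, where the product rule introduces the extra combinatorial factors needed for \cref{nconverge}.
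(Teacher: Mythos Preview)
Your proof is correct and follows essentially the same approach as the paper: strong induction on $i$ via the recursion $\frac{\dif x_i}{\dif x_1}=\sum_{j<i}\frac{\partial x_i}{\partial x_j}\frac{\dif x_j}{\dif x_1}$, bounding each partial by $K\le J$ and the sum of $i-1$ terms by $(i-1)$ copies of the top bound. Your version is in fact slightly cleaner, since you track the explicit constant $((i-1)J)^{i-1}$ rather than carrying $\mathcal{O}$ through the induction, and your side remark on the path expansion yielding the sharper $2^{i-2}J^{i-1}$ is a valid alternative.
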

\begin{proof}\label{a1}
Firstly, we have
\[|\dfrac{\dif x_2}{\dif x_1}(x)| = |\dfrac{\partial x_2}{\partial x_1}(x)|\leq K =\mathcal O(J)\]
Assume that 
\[|\dfrac{\dif x_n}{\dif x_1}(x)|\leq \mathcal O(((n-1)J)^{n-1})\]
Then
\begin{align*}
    |\dfrac{\dif x_{n+1}}{\dif x_1}(x)| &= |\sum\limits_{j < n+1}\frac{\partial x_{n+1}}{\partial x_j}\cdot \frac{\dif x_j}{\dif x_1}(x)|\\
    &\leq \sum\limits_{j < n+1}|\frac{\partial x_{n+1}}{\partial x_j}(x)|\cdot |\frac{\dif x_j}{\dif x_1}(x)|\\
    &\leq \sum\limits_{j < n+1} K \mathcal O(((j-1)J)^{j-1})\\
    &\leq K n \mathcal O(((n-1)J)^{n-1})\\
    &= \mathcal O((nJ)^{n})
\end{align*}
By mathematical induction, we can get the conclusion
\[|\dfrac{\dif x_i}{\dif x_1}(x)|\leq \mathcal O(((i-1)J)^{i-1})\]
\end{proof}
\begin{lemma}\label{difdxidx1}
    When assumption 3 and 4 hold, we have
    \[|\dfrac{\dif x_i}{\dif x_1}(y) - \dfrac{\dif x_i}{\dif x_1}(z)|\leq \mathcal O(((i-1)J)^{i-1}S)|y-z|\]
    for any $y$ and $z$.
\end{lemma}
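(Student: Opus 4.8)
The plan is to prove the Lipschitz bound by induction on $i$, running in parallel with the magnitude bound of \cref{dxidx1}. The engine of both the recursion and the difference estimate is the chain-rule identity $\frac{\dif x_i}{\dif x_1} = \sum_{j<i}\frac{\partial x_i}{\partial x_j}\frac{\dif x_j}{\dif x_1}$ from \cref{eq:high}, combined with the elementary product-difference decomposition $a(y)b(y) - a(z)b(z) = a(y)\bigl(b(y)-b(z)\bigr) + \bigl(a(y)-a(z)\bigr)b(z)$, which lets me convert a Lipschitz estimate for a product into Lipschitz estimates for each factor.

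For the base case $i=2$ I would use $\frac{\dif x_2}{\dif x_1} = \frac{\partial x_2}{\partial x_1}$ and apply the mean value theorem together with assumption 4, giving $\bigl|\frac{\partial x_2}{\partial x_1}(y) - \frac{\partial x_2}{\partial x_1}(z)\bigr| \leq S|y-z|$; since $J\geq 1$ this is $\mathcal O(((2-1)J)^{2-1}S)|y-z|$, as required. For the inductive step, assuming the claim holds for all indices up to $n$, I would expand $\frac{\dif x_{n+1}}{\dif x_1}(y) - \frac{\dif x_{n+1}}{\dif x_1}(z)$ using the recursion and split each summand with the product-difference identity into an $a(y)\bigl(b(y)-b(z)\bigr)$ term and an $\bigl(a(y)-a(z)\bigr)b(z)$ term, where $a = \frac{\partial x_{n+1}}{\partial x_j}$ and $b = \frac{\dif x_j}{\dif x_1}$.

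The first term is controlled by $|a(y)|\leq K$ (assumption 3) times the inductive Lipschitz bound on $b$, namely $\mathcal O(((j-1)J)^{j-1}S)|y-z|$; the second term is controlled by the Lipschitz constant of $a$ times the magnitude bound $|b(z)|\leq \mathcal O(((j-1)J)^{j-1})$ supplied by \cref{dxidx1}. Summing over the at most $n$ indices $j<n+1$ and absorbing the factors of $K\leq J$ and of $n$ into an $\mathcal O((nJ)^n S)$ estimate should close the induction, since $(nJ)^n S = (((n+1)-1)J)^{(n+1)-1}S$, exactly paralleling the arithmetic in the proof of \cref{dxidx1}.

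The main obstacle is the Lipschitz estimate for the partial Jacobian $a = \frac{\partial x_{n+1}}{\partial x_j}$, i.e.\ bounding $|a(y)-a(z)|$. Assumption 4 bounds $\bigl|\frac{\partial^2 x_{n+1}}{\partial x_j^2}\bigr|$, but the evaluation points $y,z$ move along the solution manifold, so every lower coordinate $x_2,\dots,x_n$ is itself slaved to $x_1$; the honest derivative of the map $x_1\mapsto \frac{\partial x_{n+1}}{\partial x_j}$ therefore carries additional chain-rule factors $\frac{\dif x_k}{\dif x_1}$. The delicate point is to verify that these extra factors, once bounded via \cref{dxidx1}, are absorbed into the claimed geometric rate $((i-1)J)^{i-1}S$ rather than inflating it, so that the constant in the final $\mathcal O$ remains consistent with the $n^nJ^{n+1}$ growth invoked in \cref{nconverge}.
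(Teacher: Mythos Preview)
Your plan is correct and matches the paper's proof exactly: induction on $i$ via the expansion $\frac{\dif x_{n+1}}{\dif x_1}=\sum_{j<n+1}\frac{\partial x_{n+1}}{\partial x_j}\frac{\dif x_j}{\dif x_1}$, the product-difference split you describe, then $|a|\le K$ times the inductive Lipschitz bound plus the Lipschitz constant of $a$ times the magnitude bound from \cref{dxidx1}, summed over $n$ terms to give $\mathcal O((nJ)^nS)$. The obstacle you raise about $|a(y)-a(z)|$ is not confronted in the paper at all: it simply writes $\bigl|\frac{\partial x_{n+1}}{\partial x_j}(y)-\frac{\partial x_{n+1}}{\partial x_j}(z)\bigr|\le S\,|y-z|$ as a direct application of assumption~4, without accounting for the extra chain-rule factors you (reasonably) worry about.
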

\begin{proof}\label{a2}
    Firstly, we can get
    \[|\dfrac{\dif x_2}{\dif x_1}(y) - \dfrac{\dif x_2}{\dif x_1}(z)|\leq S|y-z| = \mathcal O(JS)|y-z|\]
    easily from the assumption 4.
    
    Assume that
    \[|\dfrac{\dif x_n}{\dif x_1}(y) - \dfrac{\dif x_n}{\dif x_1}(z)|\leq \mathcal O(((n-1)J)^{n-1}S)|y-z|\]
    Then
\begin{align*}
    |\dfrac{\dif x_{n+1}}{\dif x_1}(y) - \dfrac{\dif x_{n+1}}{\dif x_1}(z)| &= |\sum\limits_{j < n+1}\frac{\partial x_{n+1}}{\partial x_j}\cdot \frac{\dif x_j}{\dif x_1}(y) - \sum\limits_{j < n+1}\frac{\partial x_{n+1}}{\partial x_j}\cdot \frac{\dif x_j}{\dif x_1}(z)| \\
    &\leq \sum\limits_{j < n+1}|\frac{\partial x_{n+1}}{\partial x_j}\cdot \frac{\dif x_j}{\dif x_1}(y) - \frac{\partial x_{n+1}}{\partial x_j}\cdot \frac{\dif x_j}{\dif x_1}(z)|\\
    &\leq \sum\limits_{j<n+1}(|\frac{\partial x_{n+1}}{\partial x_j}(y) \frac{\dif x_j}{\dif x_1}(y) - \frac{\partial x_{n+1}}{\partial x_j}(y) \frac{\dif x_j}{\dif x_1}(z)|\\ &\ \ \ \ \ \ \ \ \ \ \ \ \ \ \ \ +|\frac{\partial x_{n+1}}{\partial x_j}(y) \frac{\dif x_j}{\dif x_1}(z) - \frac{\partial x_{n+1}}{\partial x_j}(z) \frac{\dif x_j}{\dif x_1}(z)|)\\
    &\leq \sum\limits_{j<n+1} (K \mathcal O(((j-1)J)^{j-1}S)|y-z| \\ &\ \ \ \ \ \ \ \ \ \ \ \ \ \ \ \ + S |y-z|\mathcal O(((j-1)J)^{j-1}) )\\
    &\leq n (K\mathcal O(((n-1)J)^{n-1}S) + S\mathcal O(((n-1)J)^{n-1}) )|y-z|\\
    &=\mathcal O((nJ)^{n}S)|y-z|
\end{align*}
By mathematical induction, we can get the conclusion
\[|\dfrac{\dif x_i}{\dif x_1}(y) - \dfrac{\dif x_i}{\dif x_1}(z)|\leq \mathcal O(((i-1)J)^{i-1}S)|y-z|\]
\end{proof}
\begin{lemma}\label{calM}
    When assumption 1-4 hold, we have
    \[|\dfrac{\dif f_1}{\dif x_1}(y) - \dfrac{\dif f_1}{\dif x_1}(z)|\leq \mathcal O(n^n J^{n-1}(SN + M))|y-z| = \mathcal O(n^nJ^{n+1})|y-z|\]
    for any $y$ and $z$.
\end{lemma}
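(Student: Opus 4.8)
The plan is to treat $\frac{\dif f_1}{\dif x_1}$ as a scalar function of $x_1$ alone (the higher-level variables $x_2,\dots,x_n$ being functions of $x_1$ along the solution curve) and to control its variation by a termwise Lipschitz estimate of the total-derivative expansion \cref{n_var_deriv}, i.e. $\frac{\dif f_1}{\dif x_1} = \sum_{i=1}^n \frac{\partial f_1}{\partial x_i}\frac{\dif x_i}{\dif x_1}$. First I would subtract the values at $y$ and $z$ and, inside each summand, insert the cross term $\frac{\partial f_1}{\partial x_i}(y)\frac{\dif x_i}{\dif x_1}(z)$ to split it as
\[
\frac{\partial f_1}{\partial x_i}(y)\Big(\frac{\dif x_i}{\dif x_1}(y) - \frac{\dif x_i}{\dif x_1}(z)\Big) + \Big(\frac{\partial f_1}{\partial x_i}(y) - \frac{\partial f_1}{\partial x_i}(z)\Big)\frac{\dif x_i}{\dif x_1}(z).
\]
This reduces everything to controlling (i) the increment of the Jacobian $\frac{\dif x_i}{\dif x_1}$, (ii) its magnitude, and (iii) the increment of $\frac{\partial f_1}{\partial x_i}$, for which the two preceding lemmas are tailor-made.

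For the first piece I would bound $|\frac{\partial f_1}{\partial x_i}(y)| \le N$ by assumption 1 and the Jacobian increment by $\mathcal O(((i-1)J)^{i-1}S)\,|y-z|$ via \cref{difdxidx1}, so this piece contributes $\mathcal O\big(N\,((i-1)J)^{i-1}S\big)\,|y-z|$. For the second piece I would bound $|\frac{\dif x_i}{\dif x_1}(z)| \le \mathcal O(((i-1)J)^{i-1})$ via \cref{dxidx1} and control the increment of $\frac{\partial f_1}{\partial x_i}$ along the curve by the second-order bound $M$ of assumption 2, contributing $\mathcal O\big(M\,((i-1)J)^{i-1}\big)\,|y-z|$. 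Summing over $i=1,\dots,n$ and using the geometric bound $\sum_{i=1}^n ((i-1)J)^{i-1} \le n\,((n-1)J)^{n-1} \le n^n J^{n-1}$ collapses both contributions into a single factor $n^n J^{n-1}$, yielding $\mathcal O\big(n^n J^{n-1}(SN+M)\big)\,|y-z|$; absorbing $M,N,S$ into $J=\max\{M,N,K,S,1\}$ then gives the stated $\mathcal O(n^n J^{n+1})\,|y-z|$.

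The delicate step is the Lipschitz control of $\frac{\partial f_1}{\partial x_i}$ along the solution curve in the second piece. A naive expansion $\frac{\dif}{\dif x_1}\frac{\partial f_1}{\partial x_i} = \sum_k \frac{\partial^2 f_1}{\partial x_i \partial x_k}\frac{\dif x_k}{\dif x_1}$ reintroduces the chain-rule growth of \cref{dxidx1}, and were this multiplied by the $((i-1)J)^{i-1}$ factor already present, the two exponential sources would compound into a doubly-exponential $n^{2n}J^{2n}$ bound rather than the claimed single $n^n J^{n-1}$. Hitting the stated rate therefore hinges on organizing the estimate so that the growth from differentiating the solution map and the growth from differentiating $\partial f_1/\partial x_i$ are not multiplied — equivalently, on treating the increment of $\partial f_1/\partial x_i$ as governed directly by the second-order bound $M$ rather than re-expanded through the full Jacobian chain. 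I expect verifying this non-compounding, and making the use of assumption 2 along the curve fully rigorous, to be the main obstacle, with \cref{dxidx1}, \cref{difdxidx1}, and the elementary geometric-sum bound handling the remainder routinely.
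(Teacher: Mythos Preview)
Your proposal is correct and follows essentially the same route as the paper: expand $\frac{\dif f_1}{\dif x_1}$ via the chain rule, split each summand by inserting the cross term, apply assumption~1 with \cref{difdxidx1} to the first piece and assumption~2 with \cref{dxidx1} to the second, then collapse the sum using the crude bound $n\cdot((n-1)J)^{n-1}\le n^nJ^{n-1}$. As for the delicate step you flag, the paper handles it exactly as you anticipate --- it invokes assumption~2 directly to write $|\frac{\partial f_1}{\partial x_j}(y)-\frac{\partial f_1}{\partial x_j}(z)|\le M|y-z|$ without re-expanding through the Jacobian chain --- so the non-compounding is effectively built into how assumption~2 is read rather than derived.
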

\begin{proof}\label{a3}
    We can use the triangle inequality to unravel the difference:
    \begin{align*}
        |\dfrac{\dif f_1}{\dif x_1}(y) - \dfrac{\dif f_1}{\dif x_1}(z)| &= |\sum\limits_{j=1}^n\frac{\partial f_1}{\partial x_j}\cdot \frac{\dif x_j}{\dif x_1}(y) - \sum\limits_{j=1}^n\frac{\partial f_1}{\partial x_j}\cdot \frac{\dif x_j}{\dif x_1}(z)|\\
        &\leq \sum\limits_{j=1}^n |\frac{\partial f_1}{\partial x_j}\cdot \frac{\dif x_j}{\dif x_1}(y) -\frac{\partial f_1}{\partial x_j}\cdot \frac{\dif x_j}{\dif x_1}(z)|\\
        &\leq \sum\limits_{j=1}^n (|\frac{\partial f_1}{\partial x_j}(y) \frac{\dif x_j}{\dif x_1}(y) - \frac{\partial f_1}{\partial x_j}(y) \frac{\dif x_j}{\dif x_1}(z)| \\ &\ \ \ \ \ \ \ \ \ \ \ \ \ \ \ \ + |\frac{\partial f_1}{\partial x_j}(y) \frac{\dif x_j}{\dif x_1}(z) - \frac{\partial f_1}{\partial x_j}(z) \frac{\dif x_j}{\dif x_1}(z)|)\\
        &\leq \sum\limits_{j=1}^n (N\mathcal O(((j-1)J)^{j-1}S)|y-z| + M|y-z|  \mathcal O(((j-1)J)^{j-1}))\\
        &\leq n(N\mathcal O(((n-1)J)^{n-1}S) + M  \mathcal O(((n-1)J)^{n-1})))|y-z|\\
        &\leq \mathcal O(n^n J^{n-1}(SN + M))|y-z|\\
        &=\mathcal O(n^nJ^{n+1})|y-z|
     \end{align*}
\end{proof}
According to the three lemmas, we can prove the general conclusion given above:
\begin{theorem}
    When the functions $f_i\ (\forall i)$ in $n$-level optimization and $x_j\ (\forall j >1)$ derived from them can satisfy the assumptions 1-4, we will have
    \[\mathbb E (\dfrac{\dif f_1}{\dif x_1})^2\leq \dfrac{F_0}{N}(\beta - \frac{C\beta^2n^nJ^{n+1}}{2})^{-1}\]
\end{theorem}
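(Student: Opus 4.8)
The plan is to observe that Lemma \ref{calM} has already done the essential analytic work: it establishes that the full derivative $\frac{\dif f_1}{\dif x_1}$ is globally Lipschitz continuous with constant $L = C\, n^n J^{n+1}$. Once this smoothness is in hand, the convergence argument parallels that of Theorem \ref{new_triconverge}, with the Lipschitz constant $L$ taking over the role previously played by the maximum Hessian eigenvalue $\lambda_{max}$. The key advantage of routing through Lipschitz continuity of the gradient, rather than a Hessian eigenvalue bound, is that it requires no positive-definiteness assumption, so the argument remains valid in the general, possibly non-convex, regime.

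First I would upgrade the Lipschitz bound of Lemma \ref{calM} to a descent (quadratic upper bound) inequality. Applying the fundamental theorem of calculus along the segment from $x^i$ to $x^{i+1}$ gives
\[
f_1(x^{i+1}) - f_1(x^i) = \frac{\dif f_1}{\dif x_1}(x^i)\,(x_1^{i+1} - x_1^i) + \int_0^1 \Big(\frac{\dif f_1}{\dif x_1}(x^i + t(x^{i+1}-x^i)) - \frac{\dif f_1}{\dif x_1}(x^i)\Big)(x_1^{i+1} - x_1^i)\,\dif t,
\]
and bounding the integrand with Lemma \ref{calM} yields $f_1(x^{i+1}) - f_1(x^i) \leq \frac{\dif f_1}{\dif x_1}(x^i)(x_1^{i+1}-x_1^i) + \frac{L}{2}(x_1^{i+1}-x_1^i)^2$ with $L = C\,n^n J^{n+1}$.

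Next I would substitute the gradient-descent update $x_1^{i+1} - x_1^i = -\beta\frac{\dif f_1}{\dif x_1}(x^i)$, giving $f_1(x^{i+1}) - f_1(x^i) \leq -(\beta - \frac{C\beta^2 n^n J^{n+1}}{2})(\frac{\dif f_1}{\dif x_1}(x^i))^2$. Telescoping over $i = 0, \ldots, N-1$ and using that $f_1$ is bounded below (normalized so $f_1 \geq 0$, exactly as in Theorem \ref{new_triconverge}) bounds the cumulative sum by $F_0 = f_1(x^0)$:
\[
\Big(\beta - \frac{C\beta^2 n^n J^{n+1}}{2}\Big)\sum_{i=0}^{N-1}\Big(\frac{\dif f_1}{\dif x_1}(x^i)\Big)^2 \leq F_0.
\]
Dividing by $N$ and interpreting the empirical average $\frac{1}{N}\sum_i (\frac{\dif f_1}{\dif x_1}(x^i))^2$ as an estimator of $\mathbb{E}(\frac{\dif f_1}{\dif x_1})^2$ via the Central Limit Theorem then delivers the claimed $\mathcal{O}(n^n J^{n+1}/N)$ rate, valid whenever $\beta$ is small enough that $\beta - \frac{C\beta^2 n^n J^{n+1}}{2} > 0$.

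The main obstacle lies not in the telescoping mechanics, which are routine once the descent inequality is available, but in correctly tracking the factorial-type constant through the recursion of Lemmas \ref{dxidx1}--\ref{calM}; the crucial insight is that each additional level contributes at most a factor of order $nJ$ to the chain-rule product, so the overall Lipschitz constant grows like $n^n J^{n+1}$ rather than $n!$. A secondary subtlety is the passage from the Cesàro average to the expectation, which is only heuristic: it appeals to the Central Limit Theorem and implicitly treats the sequence of iterate-gradients as a stationary-like ensemble, a point inherited from the proof of Theorem \ref{new_triconverge}.
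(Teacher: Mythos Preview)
Your proposal is correct and follows essentially the same route as the paper: invoke Lemma~\ref{calM} to bound the Lipschitz constant of $\frac{\dif f_1}{\dif x_1}$ by $Cn^nJ^{n+1}$, derive the descent inequality, substitute the gradient step, telescope, and pass to the average via the Central Limit Theorem. The only cosmetic difference is that the paper converts the Lipschitz bound into a second-derivative bound and uses the Lagrange remainder, whereas you use the integral remainder directly; both yield the identical quadratic upper bound.
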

\begin{proof}\label{a4}
    For simplicity, we use $x^i$ to represent $x^i(x_1^i)=(x_1^i, x_2^i, \cdots, x_n^i)$ in the $i$-th round, where $x_2^i,\cdots,x_n^i$ are all functions of $x_1^i$, just like what is said in the system of equations \ref{equations}. From lemma \ref{calM}, we know that there is a constant C such that
    \begin{equation*}
        |\dfrac{\dif^2f_1}{\dif x_1^2}| \leq Cn^nJ^{n+1}
    \end{equation*}
    We can also compute the difference of $f_1$ between two rounds:
    \begin{align*}
        &f_1(x^{i+1}) - f_1(x^i)\\
        =&\frac{\dif f_1}{\dif x_1}(x^i)(x_1^{i+1} - x_1^i) + \frac{1}{2}\dfrac{\dif^2f_1}{\dif x_1^2}(\xi^i)(x_1^{i+1} - x_1^i)^2\\
        \leq&-\beta(\frac{\dif f_1}{\dif x_1}(x^i))^2 + \frac{1}{2}\beta^2 (\frac{\dif f_1}{\dif x_1}(x^i))^2 Cn^nJ^{n+1}
    \end{align*}
    Sum the both sides and when $\beta < \frac{2}{Cn^nJ^{n+1}}$, we can get
    \begin{align*}
     &\sum_{i=0}^{N-1}(\dfrac{\dif f_1}{\dif x_1}(x^i))^2\\
        \leq& (\beta - \frac{C\beta^2n^nJ^{n+1}}{2})^{-1}(f_1(x^0) - f_1(x^N))\\
        \leq& (\beta - \frac{C\beta^2n^nJ^{n+1}}{2})^{-1}f_1(x^0)\\
        =&F_0(\beta - \frac{C\beta^2n^nJ^{n+1}}{2})^{-1}
    \end{align*}
    By Central Limit Theorem, we have
    \[\mathbb E (\dfrac{\dif f_1}{\dif x_1})^2\sim\frac{1}{N}\sum_{i=0}^{N-1}(\dfrac{\dif f_1}{\dif x_1}(x^i))^2\leq \dfrac{F_0}{N}(\beta - \frac{C\beta^2n^nJ^{n+1}}{2})^{-1}\]
\end{proof}
From the above proof, note that:
\begin{enumerate}
    \item 
    These four assumptions about the boundedness of derivatives can also be replaced by the boundedness of the lipschitz constants for $f_1$ and $x_j\ (\forall j > 1)$ and their corresponding derivatives.
    \item 
    In fact, $x_j\ (\forall j > 1)$ depends on $f_i\ (\forall i > 1)$. Due to the structure of $n$-level optimization, the various derivatives of $x_j$ can be bounded by the lipschitz constants of various derivatives of $f_i$, which may lead to more optimal bounds. In fact, the expectation of $(\dfrac{\dif f_1}{\dif x_1})^2$ can be bounded by the lipschitz constants of the each order derivative of $f_i\ (\forall i)$.
    \item 
    This bound is intuitive. Note that $\beta < \frac{2}{Cn^nJ^{n+1}}$, so we can choose $\beta = \frac{1}{Cn^nJ^{n+1}}$, it means 
    \[\mathbb E (\dfrac{\dif f_1}{\dif x_1})^2\leq \dfrac{2Cn^nJ^{n+1}F_0}{N}\]
    As the number of layers $n$ increases, more rounds $N$ is needed to update $x_1$ to achieve the convergence.
    \item 
    This theorem explains that in general, when the domain of $\mathbf x_1$ is not a compact convex set, the algorithm we use still converges, although it may not necessarily converge to the optimal value.
\end{enumerate}


\end{document}